\documentclass[conference]{IEEEtran}
\usepackage[letterpaper,margin=0.75in,columnsep=0.25in]{geometry}
\usepackage{times}
\usepackage{amsmath,amssymb,amsfonts}
\usepackage{amsthm}
\usepackage{graphicx}
\usepackage{dblfloatfix}
\usepackage{subcaption}
\usepackage{xcolor}
\usepackage{booktabs}
\usepackage{enumitem}
\usepackage{hyperref}
\usepackage{algorithm}
\usepackage{multirow}
\usepackage{algorithmic}
\newtheorem{assumption}{Assumption}
\newtheorem{proposition}{Proposition}
\usepackage{booktabs}
\usepackage[table]{xcolor}
\usepackage{tabularx}
\usepackage{array}

\begin{document}

\title{Compression Footprints as Security Signals for Model-Poisoning Defense in Federated Learning}
\author{\IEEEauthorblockN{Sachi Shome}
	\IEEEauthorblockA{Stevens Institute of Technology\\
		sshome@stevens.edu}
	\and
	\IEEEauthorblockN{William Eiers}
	\IEEEauthorblockA{Stevens Institute of Technology\\
		weiers@stevens.edu}
	}


\maketitle

\begin{abstract}
Lossy compression is widely used in Federated Learning (FL) but is generally treated as an error source, while conventional poisoning defenses inspect update geometry.
In this work, we instead treat the compressor's response as a security signal: the input-dependent distortion and payload behavior induced by lossy compression can expose differences between honest and attack-generated updates. 
We introduce the concept of a \emph{compression footprint}: the low-dimensional collection of reconstruction, directional, sparsity, and payload statistics induced by a lossy compressor. 
We characterize sufficient conditions under which compression footprints separate honest and malicious updates, and operationalize our findings in the CRAFT (\emph{Compression-guided Robust Aggregation via Footprint Trust}) server-side robust aggregation method. 
Crucially, under a strict honest-majority assumption, CRAFT uses server-verifiable footprints, requires no client-side metadata nor knowledge of the number of malicious clients, and adds no communication beyond the compressed FL pipeline.
Moreover, while CRAFT assumes a strict honest majority, it does not require the number of malicious clients to be known in advance.
We observe that error-bounded lossy compressor (EBLC) footprints provide stronger separation than Top-K footprints and that footprint trust suppresses malicious influence.
We evaluate CRAFT under IID client data with 36\% malicious participation across six standard model-poisoning attacks, three datasets, and six robust aggregation baselines, finding that CRAFT consistently achieves the best accuracy in 7 out of 18 settings and within 1.7 percentage points of the best in the others.
Our results show that lossy compression can serve as both a communication mechanism and a security signal for robust aggregation in FL.
\end{abstract}


%
\IEEEpeerreviewmaketitle

\section{Introduction}
In Federated Learning (FL)~\cite{mcmahan2017communication}, clients collaboratively train a global model by computing local updates on their private data and sending them to a central server for aggregation. In practice, the model updates are sizeable so compression is often applied to reduce communication costs. These updates are also untrusted: Byzantine clients can submit poisoned updates to manipulate the global model. 
This means that FL deployments must be communication efficient and robust to malicious clients. 
In this work, we ask a different question: can the structure of compression-induced loss be used as a security signal to distinguish honest and malicious updates?


Conventional Byzantine defenses~\cite{yin2018byzantine,blanchard2017machine,el2018hidden,shejwalkar2021manipulating} estimate malicious behavior from update values, distances, directions, or projections. 
Compression-aware robust-learning methods~\cite{HU2026108582,Xu_2025_WACV,pmlr-v238-rammal24a,Zhu_2023} treat compression as a source of error and do not leverage the structure of the discarded information. 
Although lossy compression discards update information, the structure of that loss reveals behavior that conventional raw-update statistics may obscure. 
Our key observation is that reconstruction residuals and payload structure induced by lossy compressors, specifically error-bounded lossy compressors (EBLCs), expose attack-sensitive behavior: honest and malicious updates can produce different reconstruction errors, directional changes, near-zero behavior, and compressed sizes. 
Summarizing these differences yields a low-dimensional input-dependent \emph{compression footprint}.
Crucially, compression does not create information; rather, it transforms  updates into a representation where attack-induced structure is more separable from honest updates. 
EBLCs differ from popular lossy compression techniques like Top-$K$ sparsification~\cite{shi2019understanding}: while Top-$K$'s footprint is dominated by a single retained-versus-discarded squared-magnitude statistic, EBLCs expose several structural dimensions such as mean-squared error and sparsity change. 
Thus, lossy compression is not only a communication primitive: an update's response to lossy compression can also serve as a security signal for robust aggregation.

To characterize when this signal is useful, we establish a sufficient condition for footprint separability and formalize this in Section~\ref{sec:compression-footprint-separation}.
The intuition is that honest clients form a concentrated dominant footprint region, while sufficiently influential attacks tend to shift compressor-sensitive structure. 
This shift can induce a footprint distribution that is separable from the honest distribution when the distance between footprint centers exceeds the combined concentration radii. 
This impact-separation relationship gives the design intuition behind CRAFT: that attacks producing a larger footprint shift are more likely to be separable, while overlapping footprints receive graded rather than binary treatment. 

Several challenges arise when using compression footprints for robust aggregation. 
The server cannot trust a footprint computed during client-side compression, since malicious clients can submit poisoned updates together with misleading metadata and the server itself does not possess the original uncompressed update needed to verify that footprint.
Recompression potentially addresses the untrusted-metadata issue by computing a new diagnostic footprint, but the footprint is sensitive to the scale of the input. 
Even if the footprint is computed correctly, honest and malicious footprints may partially overlap, so clearly distinguishing malicious from honest clients may be impossible. 
Moreover, the server generally does not know the number of malicious clients in any given round. 
A footprint-utilizing defense must therefore be designed to address these challenges.

To that end, we propose CRAFT (\emph{Compression-guided Robust Aggregation via Footprint Trust}), a server-side robust aggregation method that uses compression footprints to assign continuous trust weights before aggregation. 
CRAFT differs from existing robust aggregation approaches by evaluating how an update responds to a fixed, server-controlled lossy transformation, instead of relying only on where it lies in parameter space. 
CRAFT assumes a strict honest majority and focuses on IID client data, where honest updates are expected to form a compact footprint region. CRAFT works as follows. 
In each round, clients transmit their compressed updates to the server. The server decompresses the received payloads, normalizes the resulting updates round-wise, then performs a uniform recompression pass to extract the compression footprints. 
Then, CRAFT assigns trust weights to each client based on their distance from the majority-supported footprint core, with trust decaying with distance from the core. 
Finally, CRAFT applies a trust-weighted coordinate-wise median to the decompressed updates to produce the aggregated update. 
Three properties of CRAFT are worth emphasizing. 
First, footprint extraction is server-controlled and does not rely on client-reported metadata. 
Second, recompression is diagnostic only: the final aggregation uses the received, once-decompressed updates. 
Third, CRAFT does not require knowledge of the number of malicious clients, instead relying on the majority-supported footprint core to estimate the dominant footprint region.

Our experiments show that EBLCs in particular produce a stronger honest-malicious footprint separation than Top-$K$ sparsification, that separation changes based on compressor configuration, and that CRAFT's trust-weighted aggregation suppresses malicious influence.
We evaluate CRAFT on three datasets spanning image and tabular models, against six model-poisoning attacks in a setting with 9 of 25 clients malicious, and compare it to six representative robust aggregation baselines. 
CRAFT achieves the highest final accuracy in 7 of 18 dataset-attack settings, and is within 1.7 percentage points of the best method in the others.

Our contributions are as follows:
\begin{itemize}
\item We introduce the concept of compression footprints as a low-dimensional server-verifiable signal for distinguishing honest and malicious FL updates, including analysis of why error-bounded lossy compression can induce richer and empirically more separable footprints than Top-$K$ sparsification. We characterize sufficient conditions under which compression footprints separate honest and malicious updates.
\item We propose CRAFT, a server-side robust aggregation method that combines robust normalization, server-side footprint extraction, majority-supported soft trust, and weighted-median aggregation without requiring the number of malicious clients to be known in advance.
\item We evaluate compressor choice, error tolerance, and malicious-client fraction to characterize when compression footprints separate malicious updates and suppress their influence. We evaluate CRAFT across three datasets, six model-poisoning attacks, and six aggregation baselines with 36\% malicious participation, finding that CRAFT consistently achieves the best accuracy in 7 of 18 settings and is within 1.7 percentage points of the best in the others. 
\end{itemize}

\section{Background}
\label{sec:background}

\subsection{Poisoning Attacks in Federated Learning}
\label{subsec:model-poisoning-background}

Federated learning is vulnerable to poisoning because the server updates the
global model using client-submitted information. These attacks are commonly
categorized by the adversary's goal and capability~\cite{bhagoji2019analyzing,
fang2020local,shejwalkar2021manipulating,bagdasaryan2020backdoor}. Targeted
attacks aim to induce incorrect behavior on specific inputs while preserving
benign accuracy; backdoor and model-replacement attacks are representative
examples~\cite{bagdasaryan2020backdoor,sun2019backdoor,xie2020dba,
zhang2022neurotoxin}. Untargeted attacks instead aim to reduce overall test
accuracy, and are the focus of this work.

Based on capability, poisoning attacks are either data poisoning or model
poisoning. In data poisoning, malicious clients corrupt local training data. In
model poisoning, Byzantine clients directly manipulate the updates sent to the
server, making the attack stronger because submitted updates need not correspond
to ordinary local training~\cite{bhagoji2019analyzing,fang2020local,
shejwalkar2021manipulating}.

Existing model-poisoning attacks exploit different weaknesses of aggregation.
ALIE uses coordinate-wise benign statistics to place malicious updates in a
plausible range while shifting the aggregate~\cite{baruch2019little}. IPM sends
updates in the opposite direction of the estimated benign update, pulling the
aggregate away from descent~\cite{xie2020fall}. Min-Max and Min-Sum optimize
malicious updates to evade distance-based defenses by constraining either the
maximum distance or the sum of distances to benign updates~\cite{shejwalkar2021manipulating}.
Other attacks target specific robust rules~\cite{fang2020local}, exploit
multi-round consistency~\cite{xie2025model}, or use structured update geometry
such as similarity-based manipulation~\cite{kasyap2024sine}. Overall, malicious
updates need not be obvious outliers; they may preserve plausible coordinate
values, pairwise distances, or directions while still degrading the global model.

\subsection{Existing Robust Aggregation Algorithms}
\label{subsec:robust-aggregation-background}

In non-adversarial FL, the server commonly aggregates updates using the
mean~\cite{mcmahan2017communication},
$\hat{g}^{(t)} = \frac{1}{n}\sum_{i=1}^{n} g_i^{(t)}$.
Although effective when all clients are honest, mean aggregation is highly
sensitive to Byzantine clients because every submitted update receives equal
weight.

Robust aggregation rules reduce this influence by replacing or filtering the
mean. Coordinate-wise methods, such as median and trimmed mean, aggregate each
model dimension independently and provide statistical robustness guarantees
under bounded Byzantine fractions~\cite{yin2018byzantine}. Distance-based
methods operate on whole update vectors. Krum selects the update closest to its
neighbors, Multi-Krum averages several such updates~\cite{blanchard2017machine},
and Bulyan combines Krum-style candidate selection with coordinate-wise trimmed
aggregation~\cite{el2018hidden}. Geometric-median methods provide another
robust alternative by selecting an aggregate that minimizes total distance to
client updates~\cite{pillutla2022robust}.

Other defenses use filtering, trust scores, or historical consistency. DnC
searches for suspicious directions in the submitted updates and removes
high-scoring clients~\cite{shejwalkar2021manipulating}. AFA assigns adaptive
weights based on similarity to the aggregate~\cite{munoz2019byzantine}.
FoolsGold targets sybil-style poisoning through update-similarity patterns
\cite{fung2020limitations}, FLTrust bootstraps trust from a small server-side
root dataset~\cite{cao2021fltrust}, and FLDetector detects clients whose
multi-round updates are inconsistent with historical behavior~\cite{zhang2022fldetector}.

These defenses rely on different assumptions about malicious behavior:
coordinate-wise methods look for abnormal values in individual dimensions,
distance-based methods rely on honest-client clustering, and filtering or
trust-based methods rely on detectable anomalous directions, similarity, or
temporal inconsistency. Modern poisoning attacks challenge these assumptions by
crafting updates that appear statistically or geometrically plausible while
still biasing the aggregate~\cite{baruch2019little,xie2020fall,fang2020local,
shejwalkar2021manipulating}.
\subsection{Error-Bounded Lossy Compression}
\label{subsec:compression-background}

Communication cost is a central bottleneck in federated learning because model
updates are high-dimensional and must be transmitted repeatedly across many
rounds. Compression is therefore commonly used to reduce the size of client
updates before transmission~\cite{wilkins2024fedsz}. In the setting considered
here, compression is part of the FL communication pipeline: clients compress
their updates, and the server decompresses them before aggregation.

We focus on error-bounded lossy compression (EBLC), where the compressor
controls reconstruction error according to a user-specified error bound. Given
an update $u$ and a compressor configuration with tolerance $\tau$ and
error-bound mode $m$, the client sends a compressed representation
$C_{\tau,m}(u)$. The server decompresses it as
$\bar{u} = D(C_{\tau,m}(u))$
where $D$ denotes decompression. The reconstruction difference between $u$ and
$\bar{u}$ is the compression-induced distortion.

We consider three representative EBLC compressors with different internal
mechanisms: \textbf{SZ2}, a prediction-based compressor~\cite{liang2018error}; \textsc{ZFP}, a transform-based compressor~\cite{lindstrom2014fixed,diffenderfer2019error}; and \textsc{TTHRESH}, a transform-based compressor~\cite{ballester2019tthresh}.





\subsection{Compression-aware robust aggregation}

Recent work has studied Byzantine robustness and communication compression
jointly. BROADCAST~\cite{Zhu_2023} shows that directly combining compressed
stochastic gradients with robust aggregation can suffer from compression noise
under Byzantine attacks, and reduces this effect through gradient-difference
compression and variance reduction. Rammal et al.~\cite{pmlr-v238-rammal24a}
develop Byzantine-robust compressed optimization algorithms with improved
convergence guarantees, including error-feedback and bi-directional compression
variants. Other methods use sparsification as part of the robust aggregation
pipeline. For example, LASA~\cite{Xu_2025_WACV} applies pre-aggregation
sparsification and then performs layer-adaptive filtering using magnitude and
direction information. Related compressed robust FL methods similarly treat
compression as a communication constraint or as a source of error that must be
controlled during robust learning~\cite{HU2026108582}.

The common theme is that compression is treated as a challenge to robustness:
its error must be reduced, compensated for, or designed around. To the best of
our knowledge, prior Byzantine-robust compressed learning methods do not use the
negative effects of compression, namely the structure of information loss under
compression, as a security signal against model-poisoning attacks.

\section{Threat Model}
\label{subsec:threat-model}

We assume an honest server and $n$ participating clients. Up to $f$ clients may
be malicious. The adversary controls these clients and can replace their honest
updates with arbitrary crafted updates. Malicious clients may coordinate with
one another and may know the current global model. This is the standard
Byzantine model-poisoning setting used to evaluate robust aggregation rules.

CRAFT assumes an honest-majority regime, namely $f < n/2$. This assumption is
necessary for any defense that relies on majority structure: if malicious
clients form a majority, they can define the dominant behavior observed by the
server. CRAFT does not assume that the server knows the exact number of
malicious clients.

The adversary's objective is untargeted performance degradation. After malicious
updates are included in the aggregation process, the resulting global model
should perform worse on the main test task. We do not consider backdoor attacks,
privacy attacks, client data compromise, server compromise, or attacks against
the compression library itself.

In the compressed federated learning pipeline considered here, clients submit
compressed update representations. The server decompresses all received updates
using the shared compressor configuration and then applies aggregation. CRAFT
uses the decompressed updates for model training and uses compression-footprint
statistics to estimate trust during aggregation.
\section{Compression-Footprint Separation of Malicious Updates}
\label{sec:compression-footprint-separation}

This section studies a different representation for robust aggregation based on
error-bounded lossy compression (EBLC). For each submitted update, we compare
the update to its decompressed output after a shared
compression--decompression pass and summarize that comparison as a
low-dimensional \emph{compression footprint}
We then show why this
representation can carry a useful separation signal between honest and
malicious updates.

\subsection{From Raw-Update Statistics to Compression Footprints}
\label{subsec:raw-to-footprint}

Let $g_i^{(t)} \in \mathbb{R}^d$ denote the update submitted by client $i$ at
communication round $t$. Our method changes the representation used by the
server for robust aggregation.

For a fixed EBLC compressor configuration $(\tau,m)$, where $\tau$ denotes the
compression tolerance and $m$ the error-bound mode, let $u_i^{(t)}$ denote the
input to the footprint computation and let
\[
\bar{u}_i^{(t)} = D(C_{\tau,m}(u_i^{(t)}))
\]
be its decompressed output under the shared compression--decompression
pipeline. We define the compression footprint as
\begin{equation}
x_i^{(t)} = \phi_{\tau,m}\!\left(u_i^{(t)}\right),
\label{eq:footprint-map}
\end{equation}
where $\phi_{\tau,m}(u)$ summarizes the relation between $u$ and its
decompressed output $\bar{u}=D(C_{\tau,m}(u))$.

More generally, for any input $u \in \mathbb{R}^d$, the compression footprint
is the five-dimensional vector
\begin{equation}
\phi_{\tau,m}(u)
=
\left[
r_{\tau,m}(u),
c_{\tau,m}(u),
q_{\tau,m}(u),
s_{\tau,m}(u),
\rho_{\tau,m}(u)
\right]^T,
\label{eq:footprint-vector}
\end{equation}
whose coordinates are defined in Table~\ref{tab:footprint-coordinates}.

\begin{table}[t]
\centering
\scriptsize
\renewcommand{\arraystretch}{1.12}
\setlength{\tabcolsep}{3pt}
\begin{tabularx}{\columnwidth}{
>{\raggedright\arraybackslash}p{0.17\columnwidth}
>{\raggedright\arraybackslash}p{0.24\columnwidth}
>{\raggedright\arraybackslash}X
}
\toprule
\textbf{Symbol} & \textbf{Name} & \textbf{Definition} \\
\midrule
$r_{\tau,m}(u)$
& Relative Distortion
& $\frac{\|u-\bar{u}\|_2}{\|u\|_2}$ \\
\midrule
$c_{\tau,m}(u)$
& Cosine Similarity
& $\frac{\langle u,\bar{u}\rangle}{\|u\|_2\|\bar{u}\|_2}$ \\
\midrule
$q_{\tau,m}(u)$
& Mean Squared Error
& $\frac{1}{d}\|u-\bar{u}\|_2^2$ \\
\midrule
$s_{\tau,m}(u)$
& Sparsity Change
& $\frac{1}{d}\sum_{j=1}^{d}\!\left(\mathbf{1}\{|\bar{u}_j|\leq\kappa\}-\mathbf{1}\{|u_j|\leq\kappa\}\right)$ \\
\midrule
$\rho_{\tau,m}(u)$
& Compression Ratio
& $\frac{B_{\mathrm{raw}}(u)}{B_{\tau,m}(u)}$ \\
\bottomrule
\end{tabularx}
\caption{The five coordinates of the EBLC compression footprint.}
\label{tab:footprint-coordinates}
\end{table}

Here, $d$ is the number of update coordinates, $\mathbf{1}\{\cdot\}$ is the
indicator function, and $\kappa$ is the near-zero threshold used to measure
sparsity change. The quantities $B_{\mathrm{raw}}(u)$ and $B_{\tau,m}(u)$ denote
the compressed and uncompressed payload sizes, respectively.

Relative Distortion, Cosine Similarity, and Mean Squared Error quantify
reconstruction fidelity between $u$ and $\bar{u}$. Sparsity Change measures how
compression changes the fraction of coordinates whose magnitude is below
$\kappa$, while Compression Ratio captures payload-size reduction. Compression
footprinting does not create information absent from the submitted update;
rather, it applies a nonlinear feature map that emphasizes
compressor-sensitive structure. If a poisoning attack alters this structure
while remaining inconspicuous under conventional raw-update statistics, then
the footprint representation can provide a more useful space for distinguishing
malicious clients from honest ones.

\subsection{Mathematical Motivation for Footprint Separability}
\label{subsec:footprint-separability-math}

We now formalize why EBLC can induce different footprint distributions for
honest and malicious clients.

For honest and malicious clients, we write the submitted updates as
\[
g_i^{H,t} = \mu^{(t)} + \xi_i^{(t)},
\qquad
g_i^{A,t} = g_i^{H,t} + \delta_i^{(t)},
\]
where $\mu^{(t)}$ is the mean honest update at round $t$,
$\xi_i^{(t)}$ captures natural honest variation due to stochastic
optimization and data heterogeneity, and $\delta_i^{(t)}$ is the perturbation
introduced by the attack.

The key point is that the EBLC error operator is input-dependent. For any
input $u$, the residual $E_{\tau,m}(u)=u-D(C_{\tau,m}(u))$ is not independent
random noise, nor is it determined solely by $\|u\|_2$. Because EBLC
reconstructs the input subject to a fixed error bound, both the residual and
the compressed payload size depend on how the values of $u$ interact with the
compressor. At the coordinate level we may write
$|E_{\tau,m,j}(u)| \leq b_{\tau,m,j}(u)$, where the effective error scale
$b_{\tau,m,j}(u)$ depends on both the compressor configuration and the input
itself.

This matters because two submitted updates may remain difficult to separate
under raw-space statistics yet respond differently to the same EBLC operator.
Honest updates are generated by local optimization of the same global
objective, so under IID or mildly heterogeneous data their value
distributions, reconstruction errors, angular changes, near-zero behavior, and
compressed sizes are expected to exhibit a dominant benign pattern. Malicious
updates, by contrast, are generated by an attack rule. The perturbation
$\delta_i^{(t)}$ can therefore alter the compressor response and induce a
shifted footprint distribution.

To reason at the distribution level, let
$G_H^{(t)} \sim \mathcal{P}_H^{(t)}$ and
$G_A^{(t)} \sim \mathcal{P}_A^{(t)}$ denote random honest and malicious
updates at round $t$. Their induced footprint centers are
\[
\mu_H^{\phi,t}
=
\mathbb{E}\!\left[\phi_{\tau,m}(G_H^{(t)})\right],
\qquad
\mu_A^{\phi,t}
=
\mathbb{E}\!\left[\phi_{\tau,m}(G_A^{(t)})\right],
\]
and the separation between them is
$
\Delta_\phi^{(t)}
=
\left\|
\mu_A^{\phi,t} - \mu_H^{\phi,t}
\right\|_2.
$

This formulation also clarifies the tradeoff between attack strength and
separability. If $\delta_i^{(t)}$ is too small, then the induced footprint
shift $\Delta_\phi^{(t)}$ may also be small, so malicious clients remain
difficult to distinguish from the honest population. However, such a weak
perturbation is also less likely to substantially affect the global model.
Conversely, attacks that move updates enough to significantly influence
aggregation are more likely to alter the EBLC response and therefore induce a
larger footprint shift. In this sense, footprint separability is 
tied to an impact--separation tradeoff.

We now state concentration conditions under which the honest and malicious
footprint distributions become separable. The radii introduced below are
theoretical concentration radii; they are not quantities directly computed by
the final algorithm.

\begin{assumption}[Dominant honest footprint concentration]
\label{assump:honest-footprint-concentration}
There exists a concentration radius $\mathcal{R}_H^{(t)}$ and a failure
probability $\gamma_H \in [0,1)$ such that
$\mathbb{P}(\|\phi_{\tau,m}(G_H^{(t)}) - \mu_H^{\phi,t}\|_2
\leq \mathcal{R}_H^{(t)}) \geq 1-\gamma_H$.
\end{assumption}

\begin{assumption}[Malicious footprint concentration]
\label{assump:attacker-footprint-concentration}
For a fixed attack and a fixed compressor configuration, there exists a
concentration radius $\mathcal{R}_A^{(t)}$ and a failure probability
$\gamma_A \in [0,1)$ such that
$\mathbb{P}(\|\phi_{\tau,m}(G_A^{(t)}) - \mu_A^{\phi,t}\|_2
\leq \mathcal{R}_A^{(t)}) \geq 1-\gamma_A$.
\end{assumption}

Intuitively, footprint separability requires the distance between the honest
and malicious footprint centers to exceed the combined spread of their
dominant regions.

\begin{proposition}[High-probability footprint separability]
\label{prop:footprint-separability}
Fix an EBLC compressor configuration $(\tau,m)$, and let
$\mu_H^{\phi,t}$ and $\mu_A^{\phi,t}$ denote the expected honest and
malicious footprint centers at round $t$. Suppose
Assumptions~\ref{assump:honest-footprint-concentration}
and~\ref{assump:attacker-footprint-concentration} hold. If
\[
\Delta_\phi^{(t)}
>
\mathcal{R}_H^{(t)} + \mathcal{R}_A^{(t)},
\]
then the dominant honest and malicious footprint regions are disjoint.
Consequently, a random honest footprint and a random malicious footprint are
separated in footprint space with probability at least
$1-\gamma_H-\gamma_A$.
\end{proposition}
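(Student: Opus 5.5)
The proof has two parts. First, the two dominant regions are disjoint. Second, a union bound gives the probability statement.

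Define the closed balls
\[
\mathcal{B}_H=\{x:\|x-\mu_H^{\phi,t}\|_2\le\mathcal{R}_H^{(t)}\},
\qquad
\mathcal{B}_A=\{x:\|x-\mu_A^{\phi,t}\|_2\le\mathcal{R}_A^{(t)}\}
\]
in the five-dimensional footprint space. These are the dominant honest and malicious footprint regions of Assumptions~\ref{assump:honest-footprint-concentration} and~\ref{assump:attacker-footprint-concentration}.

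For disjointness, argue by contradiction. Suppose some $x$ lies in $\mathcal{B}_H\cap\mathcal{B}_A$. The triangle inequality then gives
\[
\Delta_\phi^{(t)}=\|\mu_A^{\phi,t}-\mu_H^{\phi,t}\|_2\le\|\mu_A^{\phi,t}-x\|_2+\|x-\mu_H^{\phi,t}\|_2\le\mathcal{R}_A^{(t)}+\mathcal{R}_H^{(t)}.
\]
This contradicts the hypothesis $\Delta_\phi^{(t)}>\mathcal{R}_H^{(t)}+\mathcal{R}_A^{(t)}$, so the balls are disjoint. The same computation yields a quantitative margin: for any $x\in\mathcal{B}_H$ and $y\in\mathcal{B}_A$,
\[
\|x-y\|_2\ge\Delta_\phi^{(t)}-\mathcal{R}_H^{(t)}-\mathcal{R}_A^{(t)}>0.
\]
I would state this margin explicitly, because it gives concrete meaning to ``separated in footprint space.'' It also justifies a separating rule: the perpendicular bisector hyperplane between the two centers, or simply the test of membership in $\mathcal{B}_H$.

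For the probability claim, draw $G_H^{(t)}$ and $G_A^{(t)}$ and let $X_H=\phi_{\tau,m}(G_H^{(t)})$ and $X_A=\phi_{\tau,m}(G_A^{(t)})$. Consider the failure events $F_H=\{X_H\notin\mathcal{B}_H\}$ and $F_A=\{X_A\notin\mathcal{B}_A\}$. The assumptions give $\mathbb{P}(F_H)\le\gamma_H$ and $\mathbb{P}(F_A)\le\gamma_A$. On the complement of $F_H\cup F_A$, both footprints lie in their respective disjoint balls, so by the margin above they are separated. The union bound then gives
\[
\mathbb{P}(\text{separated})\ge1-\mathbb{P}(F_H)-\mathbb{P}(F_A)\ge1-\gamma_H-\gamma_A.
\]
The union bound needs no independence between the honest and malicious draws. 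This matters because attacks such as ALIE or Min-Max build $\delta_i^{(t)}$ from honest statistics, so $G_A^{(t)}=G_H^{(t)}+\delta_i^{(t)}$ is correlated with the honest updates.

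The only real obstacle is interpretive rather than technical: ``separated'' has to be pinned to a precise meaning. I would define it as the event that $X_H\in\mathcal{B}_H$ and $X_A\in\mathcal{B}_A$, which by the margin implies $\|X_H-X_A\|_2>0$ and correct classification by the ball rule. I would also add a remark that the bound is vacuous when $\gamma_H+\gamma_A\ge1$. Both assumptions are required to hold at the same round $t$ and for the same $(\tau,m)$.
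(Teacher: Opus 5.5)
Your proof is correct and follows essentially the same route as the paper: a union bound over the two concentration failure events, combined with the triangle inequality giving $\|\phi_{\tau,m}(G_H^{(t)})-\phi_{\tau,m}(G_A^{(t)})\|_2\ge\Delta_\phi^{(t)}-\mathcal{R}_H^{(t)}-\mathcal{R}_A^{(t)}>0$ on the good event, and you additionally prove the disjointness of the two balls, which the paper only asserts. One aside is wrong: the perpendicular bisector of the two centers need not separate the balls when the radii differ (e.g.\ if $\mathcal{R}_H^{(t)}>\Delta_\phi^{(t)}/2$), so place the separating hyperplane orthogonal to $\mu_A^{\phi,t}-\mu_H^{\phi,t}$ at a distance from $\mu_H^{\phi,t}$ strictly between $\mathcal{R}_H^{(t)}$ and $\Delta_\phi^{(t)}-\mathcal{R}_A^{(t)}$.
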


\begin{proof}
By Assumptions~\ref{assump:honest-footprint-concentration}
and~\ref{assump:attacker-footprint-concentration}, and by the union bound,
with probability at least $1-\gamma_H-\gamma_A$ both concentration events hold
simultaneously. On that event, the triangle inequality gives
\[
\left\|
\phi_{\tau,m}(G_H^{(t)}) - \phi_{\tau,m}(G_A^{(t)})
\right\|_2
\geq
\Delta_\phi^{(t)} - \mathcal{R}_H^{(t)} - \mathcal{R}_A^{(t)}.
\]
If $\Delta_\phi^{(t)} > \mathcal{R}_H^{(t)} + \mathcal{R}_A^{(t)}$, then the
right-hand side is strictly positive, so the dominant honest and malicious
footprint regions do not overlap.
\end{proof}

Proposition~\ref{prop:footprint-separability} is a sufficient condition. It identifies the regime in which EBLC provides a useful
signal: the attack-induced footprint shift must exceed the combined spread of
the dominant honest and malicious footprint regions. This regime is most
plausible when honest clients remain sufficiently coherent in footprint space
and the attack perturbs compressor-sensitive structure in a systematic way.

\subsection{EBLC vs. Top-$K$: Structural Difference}
\label{subsec:why-not-topk}

Top-$K$~\cite{shi2019understanding} sparsification is also lossy, but the footprint it induces is much
narrower than the footprint induced by EBLC. Let $S_K(g)$ denote the set of
indices corresponding to the $K$ largest-magnitude coordinates of
$g \in \mathbb{R}^d$. Under Top-$K$, the reconstruction keeps $g_j$ unchanged
for $j \in S_K(g)$ and sets all other coordinates to zero. Therefore,
\[
r_{\mathrm{TopK}}(g)^2
=
\frac{\sum_{j \notin S_K(g)} g_j^2}{\sum_{j=1}^{d} g_j^2},
\qquad
c_{\mathrm{TopK}}(g)^2
=
\frac{\sum_{j \in S_K(g)} g_j^2}{\sum_{j=1}^{d} g_j^2},
\]
and hence $c_{\mathrm{TopK}}(g)^2 + r_{\mathrm{TopK}}(g)^2 = 1$. Thus,
Relative Distortion and Cosine Similarity are not independent under Top-$K$;
they are two views of the same retained-versus-discarded squared magnitude
ratio. Mean Squared Error is governed by the same mechanism, since
$q_{\mathrm{TopK}}(g)=\frac{\|g\|_2^2}{d}r_{\mathrm{TopK}}(g)^2$. The
remaining coordinates are also weak under fixed-$K$ sparsification: Sparsity
Change is driven mainly by the fact that exactly $K$ entries are retained,
while Compression Ratio is determined largely by storing those $K$ values
together with their indices.

Therefore, the Top-$K$ footprint is dominated by one narrow source of
variation, namely concentration between retained and discarded coordinates.
EBLC behaves differently. Its residual and payload size are not determined
solely by discarded energy; they also depend on how the update interacts with
the compressor's prediction, quantization, and error-bound mechanisms.
Consequently, the induced footprint depends on richer structural properties
such as local variation, dynamic range, predictability, and the distribution
of reconstruction error across coordinates.

\section{Methodology}
\label{sec:methodology}

We operationalize the our central hypothesis in CRAFT (\emph{Compression-guided Robust Aggregation via Footprint Trust}), a server-side robust aggregation method that uses compression footprints to
assign continuous trust weights before aggregation. Its motivation is both
systems-oriented and security-oriented. In practical FL,
client-to-server communication is often a dominant bottleneck, so update
compression is already required as a systems primitive. CRAFT leverages this
same compression stage not only for communication efficiency, but also as a
source of structural signal for robustness.

CRAFT utilizes the SZ2 EBLC in relative-error mode. 
CRAFT builds on three design choices. First, the
server never relies on client-reported compression statistics, since Byzantine
clients can submit poisoned updates together with misleading metadata. Second,
all footprint measurements are recomputed by a server-controlled
compression--decompression pass applied uniformly to every received update.
Third, CRAFT does not require the number of attackers to be known in advance.
Instead, it assigns each client a continuous trust weight according to
consistency with the dominant footprint pattern in the current round.

\begin{figure*}[t]
\centering
\includegraphics[width=.9\textwidth]{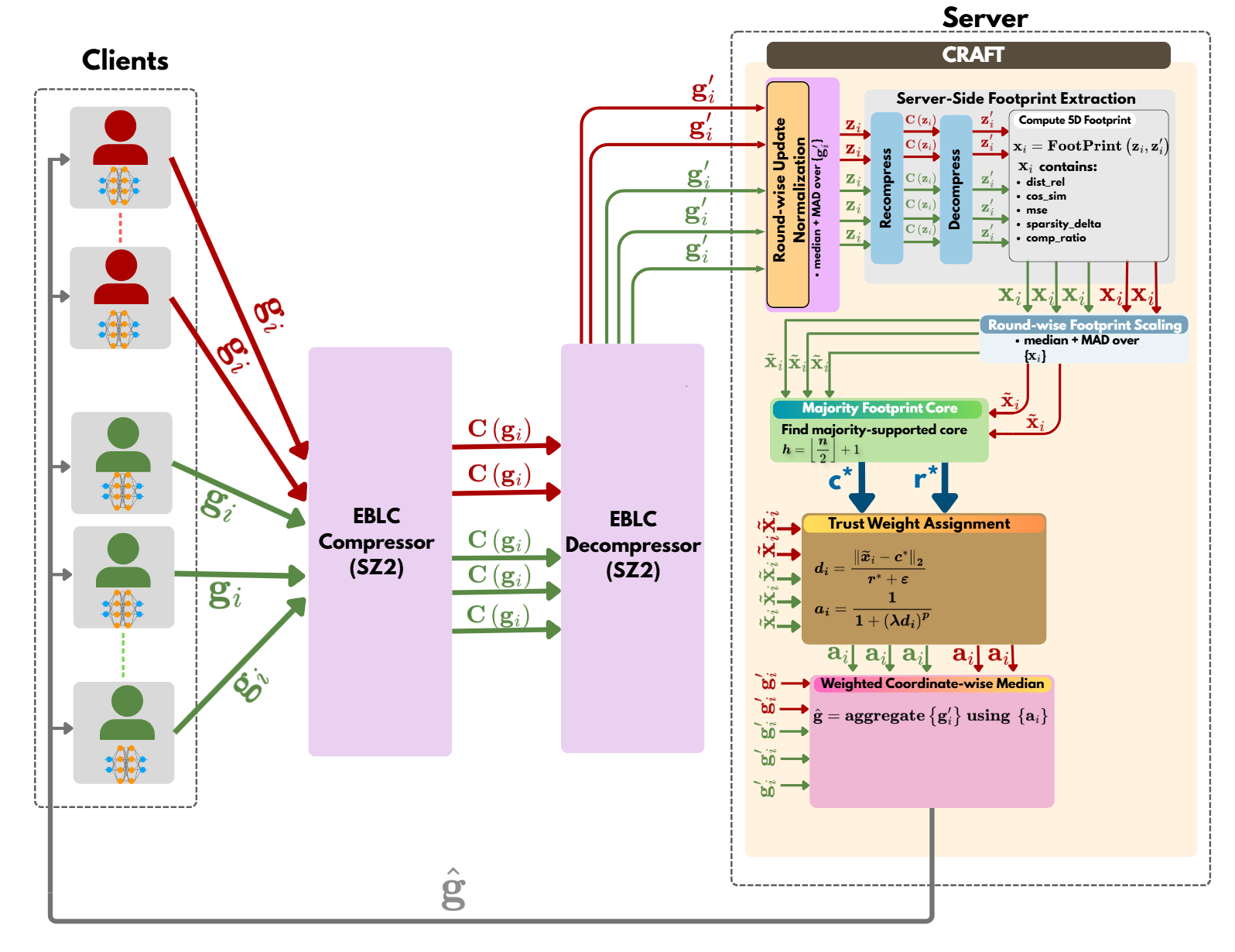}
\caption{Overview of CRAFT. The server decompresses client updates, extracts
server-side compression footprints, assigns footprint-based trust weights, and
aggregates the decompressed updates with a weighted coordinate-wise median.}
\label{fig:craft-pipeline}
\end{figure*}

Figure~\ref{fig:craft-pipeline} summarizes the full pipeline. At a fixed
communication round, let $g_i$ denote the client-side pre-compression update,
let
$y_i = C_{\tau,m}(g_i)$
denote the transmitted compressed payload, and let
$g_i' = D(y_i)$
denote the server-side decompressed update. These decompressed updates
$\{g_i'\}_{i=1}^n$ are the only updates used in the final aggregation step.

CRAFT then constructs a diagnostic footprint path from these decompressed
updates. It forms robustly normalized updates $\{z_i\}_{i=1}^n$, computes
server-side decompressed recompression outputs
$\{z_i'\}_{i=1}^n$ with $z_i' = D(C_{\tau,m}(z_i))$, extracts five-dimensional
footprints
\[
x_i = \phi_{\tau,m}(z_i) = \mathrm{Footprint}(z_i,z_i'),
\]
and robustly scales them to obtain $\{\tilde{x}_i\}_{i=1}^n$. It then
identifies a majority-supported footprint core, assigns trust weights, and
aggregates the decompressed updates $\{g_i'\}_{i=1}^n$ using a weighted
coordinate-wise median. To keep the notation readable, we suppress the round
index in this section whenever the round is fixed and clear from context.

Algorithm~\ref{alg:craft} gives a high-level summary; the remaining
subsections describe each stage in detail.

\begin{algorithm}[t]
\footnotesize
\caption{CRAFT: Compression-guided Robust Aggregation via Footprint Trust}
\label{alg:craft}
\begin{algorithmic}[1]
\REQUIRE Compressed payloads $\{y_i\}_{i=1}^{n}$, optional base aggregation weights $\{n_i\}_{i=1}^{n}$ (default $n_i=1$), compressor $C_{\tau,m}$, decompressor $D$, trust parameters $\lambda,p$
\ENSURE Aggregated update $\hat{g}$

\STATE Decompress each received payload: $g_i' \leftarrow D(y_i)$ for
$i=1,\dots,n$

\STATE Compute coordinate-wise robust centers and scales from
$\{g_i'\}_{i=1}^{n}$ and normalize the updates to obtain $\{z_i\}_{i=1}^{n}$

\FOR{each client $i$}
\STATE Compute $z_i' \leftarrow D(C_{\tau,m}(z_i))$
\STATE Compute footprint $x_i \leftarrow \phi_{\tau,m}(z_i)$
\ENDFOR

\STATE Robustly scale the footprint matrix to obtain
$\{\tilde{x}_i\}_{i=1}^{n}$

\STATE Set $h \leftarrow \lfloor n/2 \rfloor + 1$

\STATE For each client $i$, compute the radius $r_i$ needed to include its
$h$ nearest clients in the scaled footprint space

\STATE Select the majority-supported core:
$i^\star \leftarrow \arg\min_i r_i$,
$c^\star \leftarrow \tilde{x}_{i^\star}$,
$r^\star \leftarrow r_{i^\star}$

\FOR{each client $i$}
\STATE Compute normalized distance:
$d_i \leftarrow \|\tilde{x}_i - c^\star\|_2/(r^\star + \epsilon)$
\STATE Assign trust weight:
$a_i \leftarrow 1/(1+(\lambda d_i)^p)$
\STATE Set aggregation weight:
$\alpha_i \leftarrow a_i n_i$
\ENDFOR

\STATE Aggregate decompressed updates: \\
\quad \ \ $\hat{g} \leftarrow \mathrm{WeightedCoordMedian}(\{g_i'\}_{i=1}^{n}, \{\alpha_i\}_{i=1}^{n})$

\STATE \textbf{return} $\hat{g}$

\end{algorithmic}
\end{algorithm}

\subsection{Robust Round-Wise Update Normalization}
\label{subsec:update-normalization}

Before extracting footprints, CRAFT applies a robust round-wise normalization
to the decompressed updates $\{g_i'\}_{i=1}^n$. This step is needed because
server-side footprint extraction is performed on updates that have already
been decompressed once. Without normalization, the second server-controlled
compression pass can be dominated by raw coordinate scale rather than by the
structural behavior that CRAFT is intended to measure.

Let $m \in \mathbb{R}^d$ and $s \in \mathbb{R}^d$ denote the coordinate-wise
median and median absolute deviation (MAD) of $\{g_i'\}_{i=1}^n$. CRAFT
defines
\begin{equation}
z_i
=
\frac{g_i' - m}{s + \epsilon},
\label{eq:update-normalization}
\end{equation}
where division is understood coordinate-wise and $\epsilon>0$ is a small
stability constant. This transformation places all decompressed updates in a
common robust coordinate system and encourages the subsequent recompression
step to respond to relative structural behavior rather than raw magnitude.

\subsection{Server-Side Compression-Footprint Extraction}
\label{subsec:server-footprint}

After normalization, the server performs a second controlled
compression--decompression pass on each update:
\begin{equation}
z_i' = D(C_{\tau,m}(z_i)).
\label{eq:server-recompression}
\end{equation}
The server then extracts the corresponding footprint
\begin{equation}
x_i = \phi_{\tau,m}(z_i) = \mathrm{Footprint}(z_i,z_i') \in \mathbb{R}^5,
\label{eq:server-footprint}
\end{equation}
where $\phi_{\tau,m}$ is the compression-footprint map defined in
Section~\ref{sec:compression-footprint-separation}. The components are
the same five components introduced there: relative distortion,
cosine similarity, mean squared error, sparsity change, and compression ratio.

These five components summarize complementary aspects of how an update behaves
under the fixed EBLC operator. Relative distortion and mean squared error
measure decompression loss; cosine similarity measures directional
preservation; sparsity change captures how recompression alters near-zero
structure; and compression ratio reflects how efficiently the update can be
encoded under the chosen error bound. Under the same server-controlled EBLC
pass, benign and attack-modified updates can induce different residual
structure, near-zero behavior, and compressibility, and these differences are
exactly what the footprint captures.

\subsection{Robust Round-Wise Footprint Scaling}
\label{subsec:footprint-scaling}

The five footprint components naturally have different scales. If distances
were computed directly in raw footprint space, one component could dominate
simply because of its numerical range. CRAFT therefore applies a second robust
median/MAD normalization across clients in the footprint space.

Let
$X = [x_1,\dots,x_n]^T \in \mathbb{R}^{n\times 5}$
denote the footprint matrix. For each footprint component
$j \in \{1,\dots,5\}$, CRAFT computes
\begin{equation}
\tilde{x}_{ij}
=
\frac{
x_{ij} - \operatorname{median}_{k}(x_{kj})
}{
\operatorname{MAD}(\{x_{kj}\}_{k=1}^n) + \epsilon
}.
\label{eq:footprint-scaling}
\end{equation}
This produces the scaled footprint matrix
$\tilde{X}=[\tilde{x}_1,\dots,\tilde{x}_n]^T$, in which all five components
contribute on a comparable robust scale.

\subsection{Majority-Supported Footprint Trust Assignment}
\label{subsec:trust-assignment}

At a fixed round, CRAFT assigns client influence in the scaled footprint space
$\{\tilde{x}_i\}_{i=1}^n$. This step operationalizes the separation regime
characterized in
Assumptions~\ref{assump:honest-footprint-concentration}
and~\ref{assump:attacker-footprint-concentration} and in
Proposition~\ref{prop:footprint-separability}. When the honest and malicious
footprint distributions are sufficiently separated, the honest clients induce
the dominant compact region in footprint space. CRAFT estimates this region
empirically through a majority-supported core.

CRAFT assumes a strict honest majority, that is, fewer than half of the
participating clients are Byzantine. This is the same breakdown regime assumed
by standard majority-based robust aggregation rules. Let
$h = \left\lfloor \frac{n}{2} \right\rfloor + 1$.
For each client $i$, define
\begin{equation}
r_i
=
d_h\!\left(\tilde{x}_i,\{\tilde{x}_j\}_{j=1}^n\right),
\label{eq:majority-radius}
\end{equation}
where $d_h(\cdot)$ denotes the distance to the $h$th nearest neighbor in the
scaled footprint space. Equivalently, $r_i$ is the smallest radius of a ball
centered at $\tilde{x}_i$ that contains a strict majority of clients. CRAFT
then selects
\[
i^\star = \arg\min_i r_i, \qquad
c^\star = \tilde{x}_{i^\star}, \qquad
r^\star = r_{i^\star}.
\]

The point $c^\star$ is the empirical majority-supported core and $r^\star$ is
the corresponding majority radius. Because $h$ is a strict majority, an
attacker-only cluster cannot define $(c^\star,r^\star)$ unless Byzantine
clients control more than half of the participating clients. Under the
honest-majority assumption, the majority-supported core must therefore be
anchored by the dominant region containing honest participation.

A malicious client can nevertheless be selected as the center index $i^\star$
if its footprint lies inside the same dominant region as the honest majority
and yields the smallest majority-supported radius. This does not mean that
CRAFT has identified a malicious cluster as benign. Rather, it means that, in
that round, the selected point is geometrically representative of the same
dominant footprint region as nearby honest clients. Since CRAFT uses $c^\star$
as a geometric anchor rather than as an estimate of client identity, centering
trust around such a point remains consistent with the majority geometry.

Once the core is determined, CRAFT assigns trust according to normalized
distance from that core:
\begin{equation}
d_i
=
\frac{\|\tilde{x}_i - c^\star\|_2}{r^\star + \epsilon},
\qquad
a_i
=
\frac{1}{1 + (\lambda d_i)^p},
\label{eq:trust-weight}
\end{equation}
where $\lambda>0$ controls how quickly trust decreases as a client moves away
from the majority-supported footprint core, and $p\geq1$ controls the curvature
of this decay. In practice, \(\lambda\) should be calibrated relative to the
normalized majority radius: smaller values give a more tolerant weighting rule,
while larger values downweight clients soon after they leave the core. The
parameter \(p\) controls how gradual or abrupt this transition is; moderate
values avoid making trust overly sensitive to small footprint fluctuations near
the boundary.

This soft weighting is important when the footprint separation is imperfect.
If Proposition~\ref{prop:footprint-separability} is strongly satisfied, then
malicious footprints lie outside the dominant majority-supported region and
receive very small weights. If the honest population is more dispersed, as can
happen under stronger non-IID heterogeneity, then the empirical majority
radius becomes broader and some malicious clients may lie inside or near the
boundary of $r^\star$. Likewise, if the attack-induced footprint shift is
small relative to the combined honest and malicious spread, then malicious
footprints may overlap more with the honest population. In these cases CRAFT
does not force a brittle binary decision. Instead, it reduces influence in
proportion to inconsistency with the dominant footprint pattern.

This interpretation is also consistent with the impact--separation tradeoff
discussed in Section~\ref{subsec:footprint-separability-math}. Attacks that
induce a larger systematic footprint shift are easier to downweight. Attacks
that remain embedded in the dominant footprint region are harder to
distinguish, but they also induce less geometric separation under the common
compression operator. Accordingly, CRAFT is best aligned with the IID or
mildly heterogeneous regime, where the dominant honest footprint core is most
stable.

Figure~\ref{fig:trust-weight-examples} illustrates the trust-weighting
mechanism under the IPM attack on CIFAR-10.

\begin{figure}[t]
\centering
\begin{subfigure}[t]{0.97\linewidth}
\centering
\includegraphics[width=\linewidth]{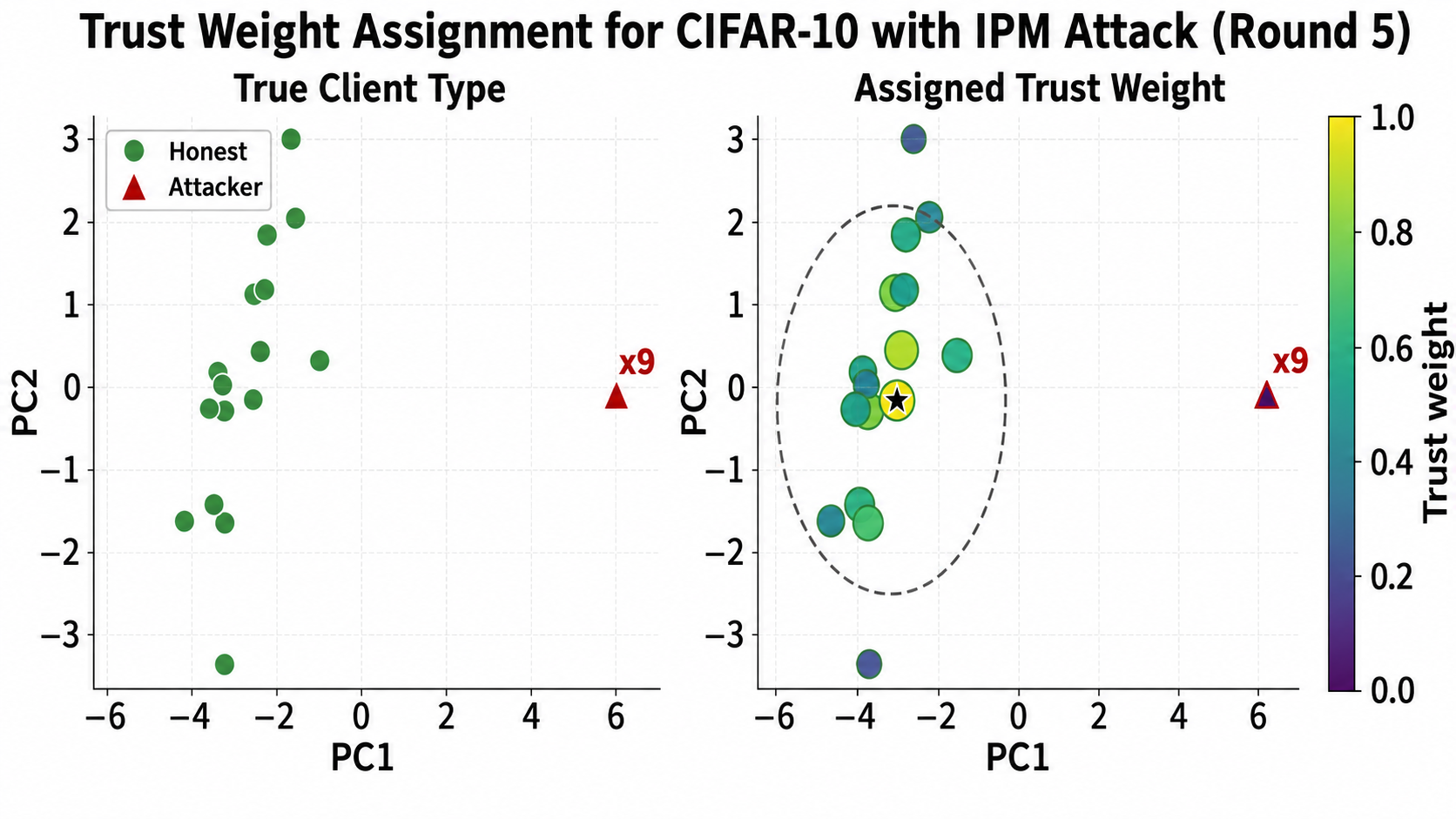}
\caption{Clear footprint separation.}
\label{fig:trust-clear-case}
\end{subfigure}

\vspace{0.25em}

\begin{subfigure}[t]{0.97\linewidth}
\centering
\includegraphics[width=\linewidth]{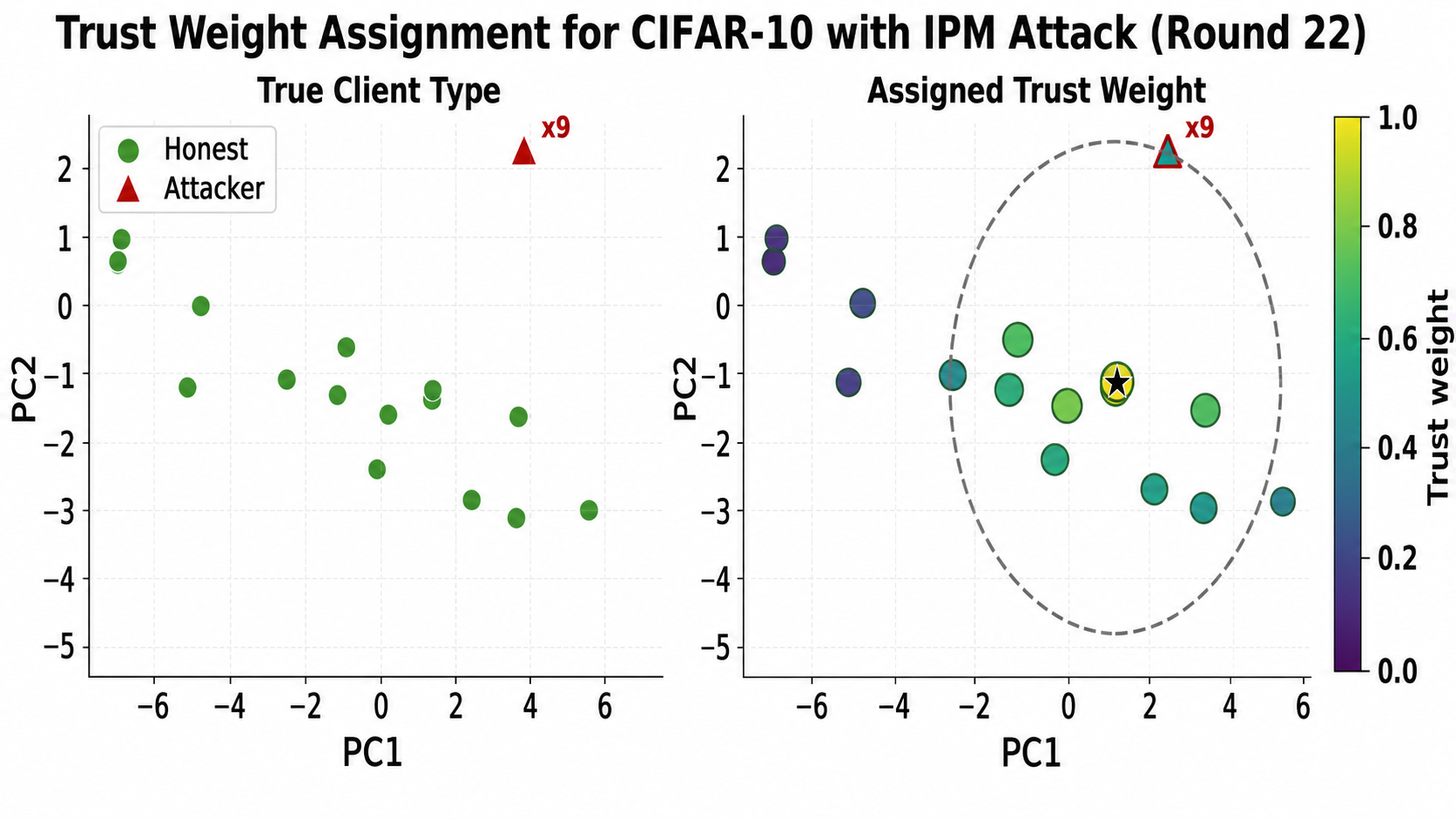}
\caption{Boundary case with reduced attacker weight.}
\label{fig:trust-boundary-case}
\end{subfigure}

\caption{Examples of CRAFT trust-weight assignment under the IPM attack on CIFAR-10.}
\label{fig:trust-weight-examples}
\end{figure}

\subsection{Trust-Weighted Robust Aggregation}
\label{subsec:weighted-median}

The footprint path is diagnostic only. CRAFT does not aggregate the normalized
updates $\{z_i\}_{i=1}^n$ or the server-side decompressed recompression outputs
$\{z_i'\}_{i=1}^n$. The final aggregation is performed on the decompressed
client updates $\{g_i'\}_{i=1}^n$, since these are the updates actually
recovered from the received client payloads.

CRAFT can be viewed as a trust-weighting layer that can be paired with a robust
aggregation rule capable of using client weights. Let $n_i$ denote the base
aggregation weight of client $i$, such as its number of local examples; for
uniform weighting, one can set $n_i=1$ for all clients. CRAFT combines this
base weight with the footprint trust score as
$\alpha_i = a_i n_i$ .
In this work, we instantiate the final aggregation rule as a weighted
coordinate-wise median over $\{g_i'\}_{i=1}^n$ using weights
$\{\alpha_i\}_{i=1}^n$. For each model coordinate, the weighted median selects
the value at which the cumulative weight first reaches half of the total
weight.

This final step provides robustness at two levels. The footprint weights reduce
the influence of updates whose compression behavior is inconsistent with the
dominant footprint pattern, while the weighted median limits the effect of
remaining high-magnitude or strategically placed values in individual
coordinates.
\section{Experiment Setup}
\label{sec:evaluation}

We now evaluate CRAFT and how well the structure of compression-induced loss can be used
as a security signal to distinguish honest and malicious updates. Unless otherwise stated,
all experiments use IID client partitions, which is the primary regime targeted by CRAFT:
under IID data, honest updates are more likely to induce a compact majority-supported
footprint region, making distance from that region a useful trust signal. Unless otherwise
stated, we simulate 25 clients with 9 malicious clients (36\%), preserving an honest majority. For CRAFT, we use the same trust-weight parameters in all main experiments:
\(\lambda=2\) for the footprint-distance scale and \(p=4\) for the trust-decay
power. Appendix~\ref{appendix:trust-parameter-ablation} reports an ablation over
these parameters on CIFAR-10 under IPM attack.

Our evaluation process is as follows.
First, we examine whether lossy compressors induce attack-sensitive footprints and how compressor
choice and tolerance affect the signal. Second, we evaluate whether footprint-derived trust
suppresses malicious influence and enables competitive end-to-end robustness. Finally, we test
CRAFT under non-IID data and a defense-aware attack, and report its server-side cost.

\label{subsec:experimental-setup}

\subsection{Datasets}
We evaluate on CIFAR-10, Fashion-MNIST, and Purchase. CIFAR-10 contains
60,000 color images from 10 object classes, with 50,000 training images and
10,000 test images. Fashion-MNIST contains 70,000 grayscale images from 10
clothing categories, with 60,000 training images and 10,000 test images.
Purchase is a tabular classification dataset derived from customer shopping
records, where each example is represented by a binary feature vector
indicating purchased items and the task is to predict the customer's purchase
class. Together, these datasets allow us to evaluate CRAFT across both image and
tabular learning tasks, rather than restricting the evaluation to a single data
type.

For CIFAR-10, we use a seven-layer CNN with two convolutional layers
(16 and 64 channels), max-pooling after each convolution, two fully connected
hidden layers of sizes 384 and 192, and a final 10-class output layer. For
Fashion-MNIST, we use a compact CNN with four $3\times3$ convolutional layers
(two with 32 channels and two with 64 channels), group normalization, max
pooling, dropout, and a final fully connected classifier. For Purchase, we use
an MLP with hidden dimensions 512, 256, and 128, ReLU activations, and a final
classification layer over the classes.

\subsection{Compressors}
Unless otherwise stated, CRAFT utilizes the SZ2 compressor as the default compressor.
We additionally evaluate CRAFT with
two other state-of-the-art EBLCs, ZFP and TThresh, as well as with Top-$K$ sparsification.
CRAFT uses SZ2 in relative error-bound mode with tolerance $\tau=0.01$. We choose
this setting because FedSZ~\cite{wilkins2024fedsz} reports that $\tau=0.01$ in
SZ2 relative mode provides the best accuracy--compression tradeoff for
federated learning.  Relative error bounds scale compression tolerance to each coordinate's local magnitude rather than applying a fixed tolerance uniformly. This suits CRAFT because its footprint metric depends on how updates change under compression given their distributional structure.

\subsection{Poisoning attacks}

We evaluate six attacks: ALIE, IPM, Min-Max, Min-Sum, Sine, and PoisonedFL.
Together, these attacks stress different assumptions made by robust aggregation:
ALIE perturbs updates using honest-gradient statistics, IPM reverses the honest
update direction, Min-Max and Min-Sum are constructed to evade distance-based
filters, and Sine and PoisonedFL introduce structured model-poisoning behavior.
For ALIE, we use \(z=1\) in all experiments. For IPM, we use
\(\epsilon=10\), so malicious clients submit a scaled update in the opposite
direction of the mean honest update. These values follow the commonly used
strong-attack settings in the corresponding attack evaluations, where they are
large enough to significantly degrade standard aggregation while still producing
updates that are useful for stress-testing robust defenses. The remaining attacks
use their standard construction without an additional manually swept strength
parameter.

\subsection{Defense Baselines}
All defenses are evaluated under the
same client population and attack configuration. We compare CRAFT with six aggregation baselines: Mean, Median, Trimmed Mean,
Krum, Bulyan, and DnC. All methods are evaluated under the same
compressed-update pipeline. Clients transmit SZ2-compressed FedSGD updates, and
the server decompresses the received updates before applying the selected
aggregation rule. CRAFT uses the same decompressed updates for aggregation, but
also performs server-side footprint extraction to assign trust weights. 

\section{Results}
\label{sec:evaluation}

\subsection{Footprint Separation Analysis}
\label{subsec:mechanism-analysis}

We first evaluate our central hypothesis: does the structure of the information
loss induced by lossy compressors provide clear footprint separation to distinguish
honest and malicious clients? To answer this question, the following experiments
evaluate the choice of compressor, the tolerance of the EBLC pass, and the majority-core 
trust assignment.


\subsubsection{EBLC versus Top-\(K\) Footprint Structure}

We compare the footprint structure induced by EBLC/SZ2 and Top-\(K\)
sparsification on CIFAR-10 under the ALIE attack with \(9\) malicious clients
out of \(25\). For SZ2, we use relative error tolerance \(\tau=0.01\). For Top-\(K\), we retain the largest \(1\%\) of coordinates by absolute magnitude. For each round, we compute the five footprint
coordinates for every submitted client update and apply the same round-wise
robust normalization used by CRAFT before measuring honest--malicious
separation. The statistics in Figure~\ref{fig:effectsize-topk-eblc} are pooled
across the evaluated rounds; the corresponding per-metric distributions are
shown in Appendix~\ref{fig:violin-topk-eblc}.

Figure~\ref{fig:effectsize-topk-eblc} shows that both compression mechanisms can
produce some honest--malicious separation, but the structure of the signal is
different. Top-\(K\) separation is concentrated in a narrower set of metrics:
relative distortion and MSE show positive gaps of about \(3.32\) and \(3.73\),
while compression ratio is essentially uninformative. EBLC/SZ2 produces a
broader footprint response. Its strongest signal appears in sparsity change
(\(21.54\)), but cosine similarity (\(7.17\)), relative distortion
(\(-9.64\)), MSE (\(-4.38\)), and compression ratio (\(-4.71\)) also shift
substantially. The sign only indicates which group has the larger normalized
mean; the magnitude indicates separation strength.

These results support the use of EBLC footprints rather than a single
sparsification statistic. Top-\(K\) acts mainly through retained-versus-discarded
magnitude, whereas SZ2 changes the update through prediction, quantization, and
entropy coding. As a result, SZ2 exposes multiple complementary views of the
update's numerical structure, giving the footprint representation more ways to
separate malicious updates from the honest majority.
    
\subsubsection{Choice of EBLC}

Figure~\ref{fig:compressor-footprint-amplification} compares SZ2, ZFP, and
TTHRESH as footprint extractors. The x-axis shows the attack type. The y-axis
reports a footprint-amplification ratio: the separation obtained from the final
five-dimensional footprint divided by the separation obtained from the
compressor residual alone. Therefore, larger values indicate that the full
footprint features make the honest--malicious difference more visible than the
raw residual signal by itself.

SZ2 gives the largest amplification for all tested attacks. This suggests that
SZ2 is a better probe for the update structure relevant to our defense. SZ2
uses prediction and error-bounded quantization, so its output is sensitive not
only to the magnitude of the update, but also to how predictable the update is,
how its values vary locally, how much near-zero structure changes, and how well
the update can be compressed. Malicious updates can alter these properties even
when they are not extreme in the original gradient space. As a result, the SZ2
footprint produces stronger honest--malicious separation than the ZFP and
TTHRESH footprints in our experiments.

Min-Max is the weakest case in the comparison. This is expected because
Min-Max is designed to remain close to honest updates under distance-based
criteria, so the compressor-induced differences are smaller. Even in this
harder setting, SZ2 still produces the strongest footprint amplification among
the tested compressors.

\begin{figure}[t]
\centering
\includegraphics[width=\columnwidth]{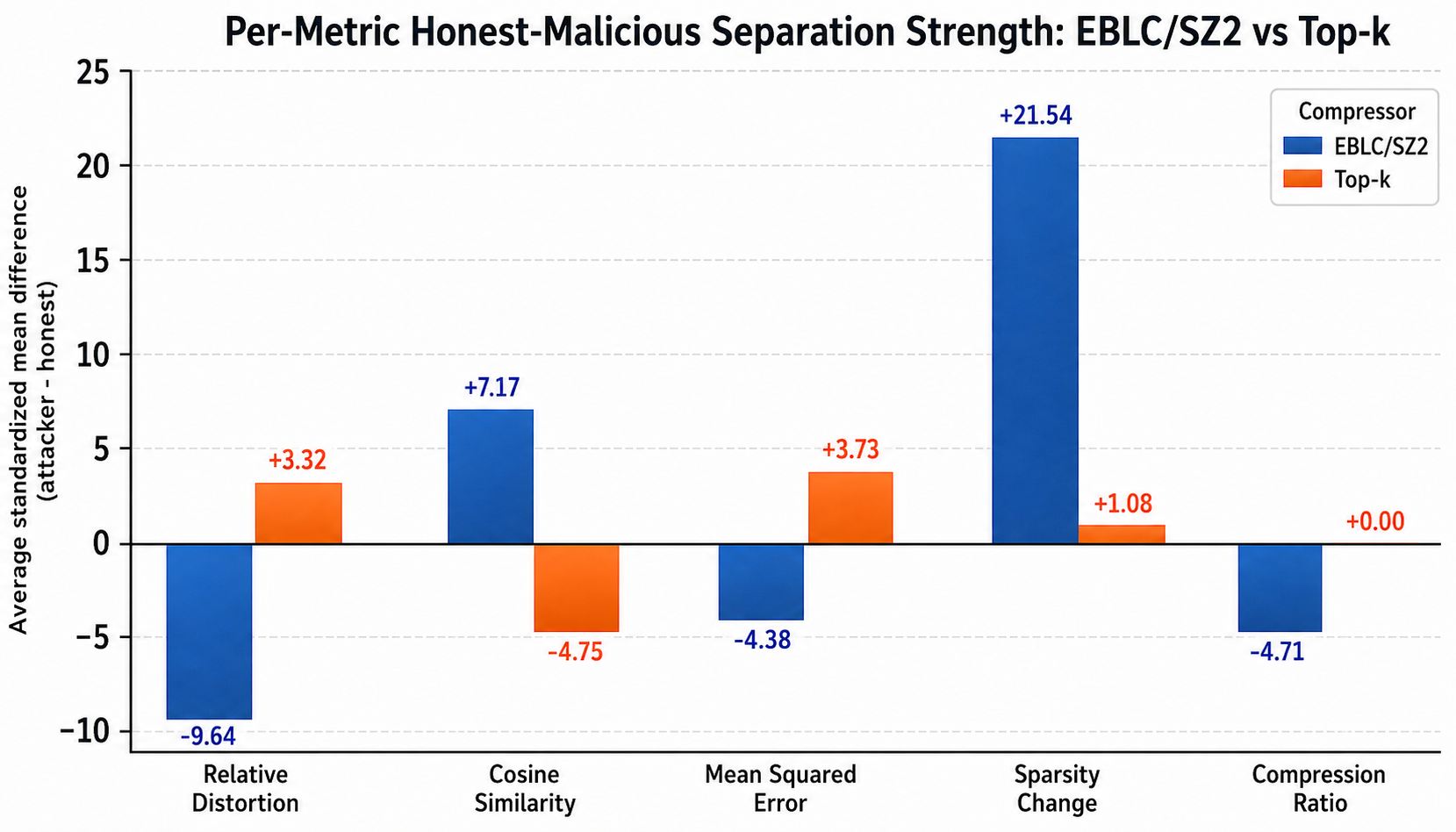}
\caption{Per-metric honest--malicious footprint separation for EBLC/SZ2 and
Top-$K$.}
\label{fig:effectsize-topk-eblc}
\end{figure}

\begin{figure}[t]
\centering
\includegraphics[width=\columnwidth]{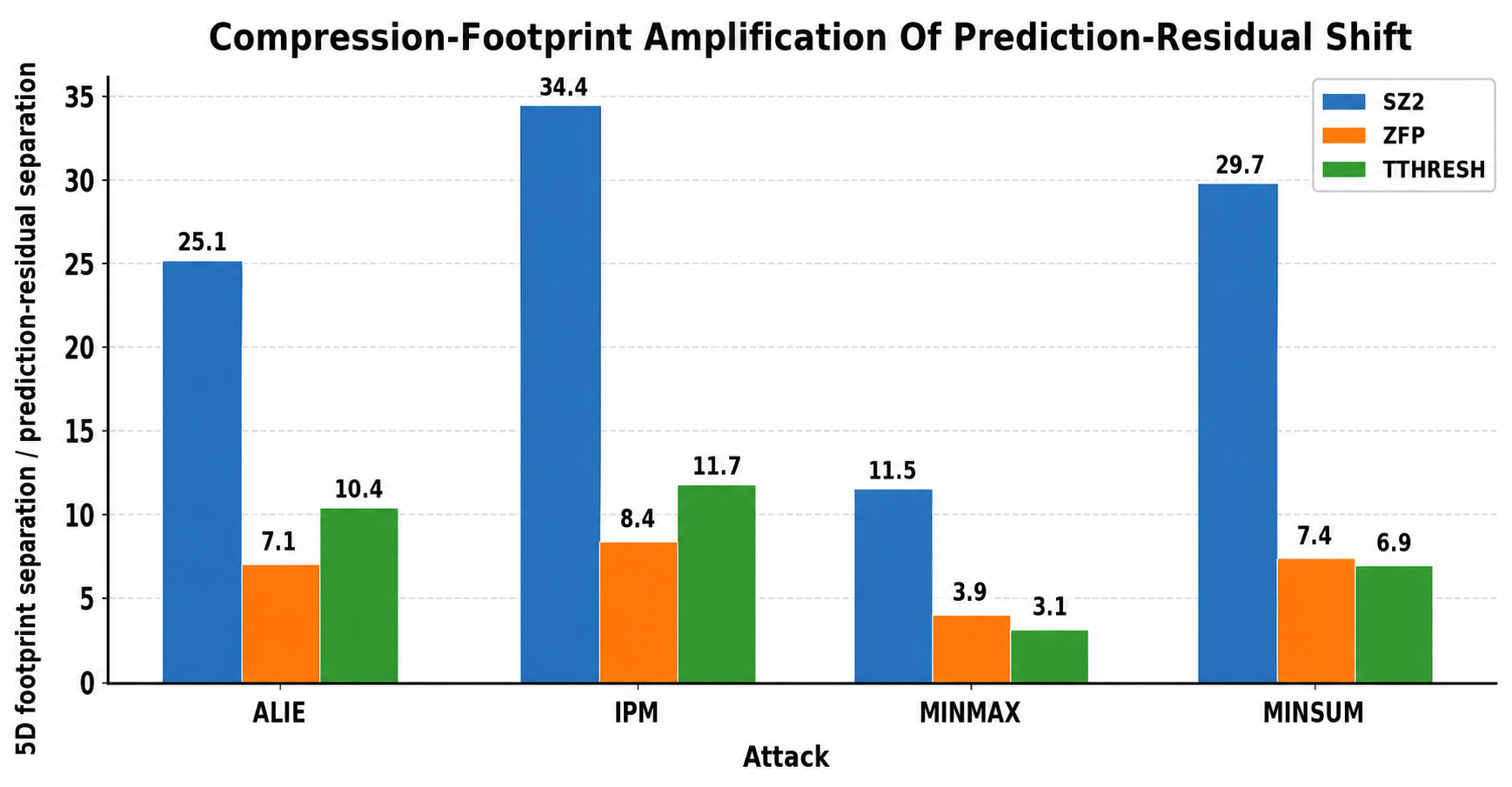}
\caption{EBLC-dependent footprint amplification across attacks; SZ2 gives
the strongest amplification in this setting.}
\label{fig:compressor-footprint-amplification}
\end{figure}

\begin{figure}[t]
\centering
\includegraphics[width=\columnwidth]{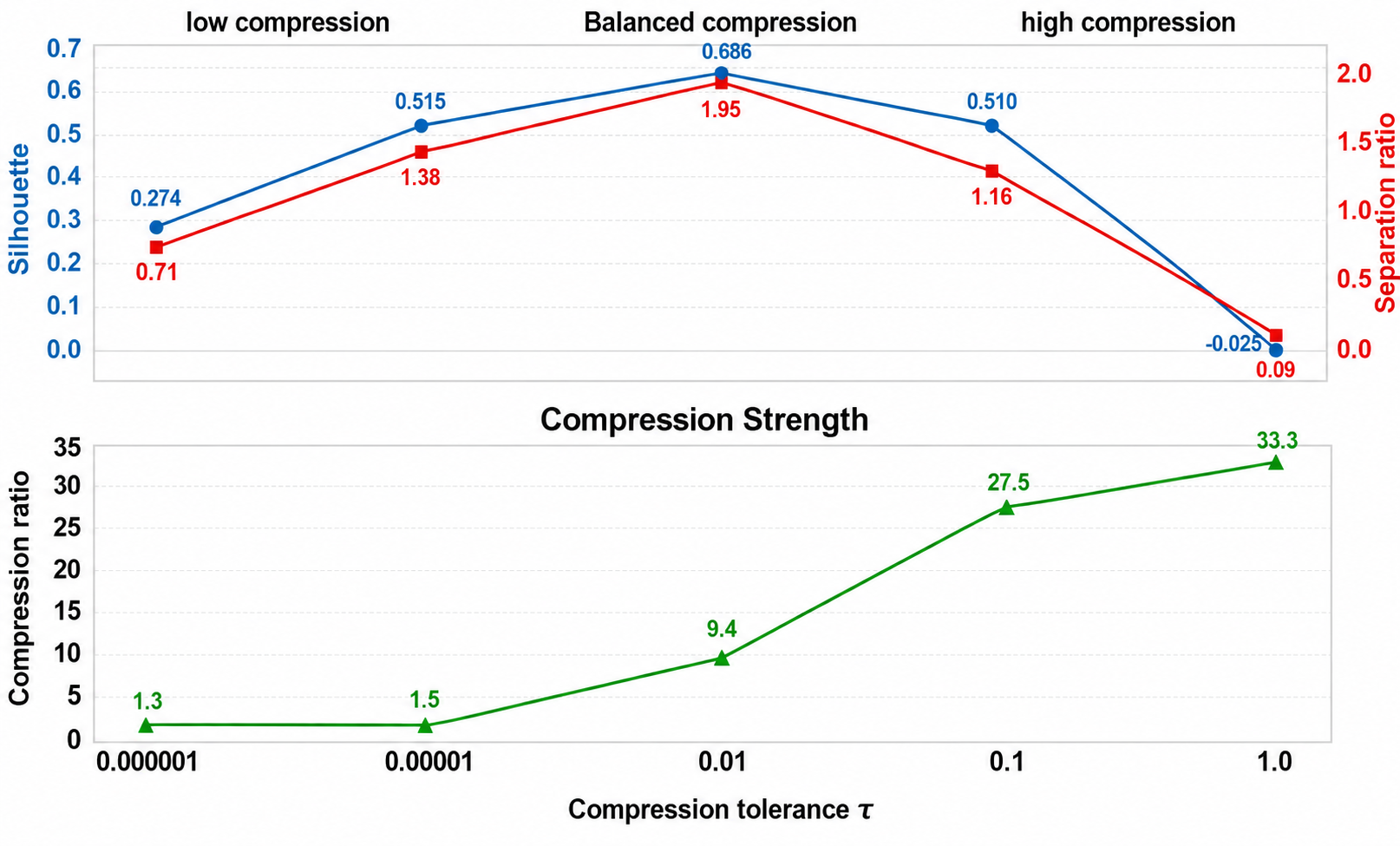}
\caption{Effect of SZ2 tolerance on footprint separation and compression
strength.}
\label{fig:tolerance-footprint-separation}
\end{figure}

\subsubsection{Tolerance and Footprint Signal}
Figure~\ref{fig:tolerance-footprint-separation} studies how the SZ2 tolerance
affects footprint separation. The result shows a non-monotonic relationship.
When the tolerance is too small, compression is weak and the decompressed update
is nearly identical to the input. In that regime, the footprint has little
signal because both honest and malicious updates appear almost unchanged by the
compression pass.

When the tolerance is too large, compression becomes too aggressive. The
compressed representation removes useful structure from both honest and
malicious updates, and the footprint signal again weakens. The useful region is
therefore in the middle: the compressor must perturb the update enough to
expose structural differences, but not so much that it destroys the geometry
needed for reliable comparison.

This observation is consistent with the communication--accuracy tradeoff
reported by FedSZ~\cite{wilkins2024fedsz}, which identifies SZ2 relative mode
with tolerance $\tau=0.01$ as a strong operating point for federated learning.
Our results show that the same tolerance is also meaningful from the defense
perspective: it provides useful compression while preserving enough structure
for the footprint to separate honest and malicious updates.

The lower part of the figure shows the corresponding compression ratio. As
expected, larger tolerances increase compression strength. The key point is
that the strongest observed footprint-signal operating point is not simply the largest compression ratio.
CRAFT needs a tolerance that balances communication reduction with diagnostic
footprint quality. The region around $\tau=0.01$ provides this balance in our
experiments.

\subsubsection{Footprint signal as a function of attack}

Figure~\ref{fig:craft-metric-contribution} shows that the footprint signal is
not carried equally by all five metrics. The dominant coordinate depends on the
attack. On CIFAR-10, Sparsity Change is the largest contributor for ALIE,
Min-Max, and Min-Sum, accounting for about $68.4\%$, $47.9\%$, and $63.4\%$ of
the separation, respectively. In contrast, IPM is dominated by Mean Squared
Error, which contributes about $54.2\%$, while Sparsity Change contributes only
about $7.1\%$. Sine and PoisonedFL are also strongly MSE-driven, with MSE
contributing about $86.7\%$ and $72.8\%$, respectively.

This attack-dependent behavior is the main point of the figure. ALIE and the
distance-constrained attacks often change how concentrated the update remains
after compression, making Sparsity Change informative. IPM and the more
structured attacks instead alter reconstruction error more strongly, making MSE
the dominant signal. Aggregated over all dataset--attack settings in our
analysis, MSE is the largest average contributor ($46.1\%$), followed by
Sparsity Change ($24.1\%$) and Compression Ratio ($13.5\%$). These results
support using a multi-metric footprint: no single compression statistic explains
all attacks.

\begin{figure}[t]
\centering
\includegraphics[width=\columnwidth]{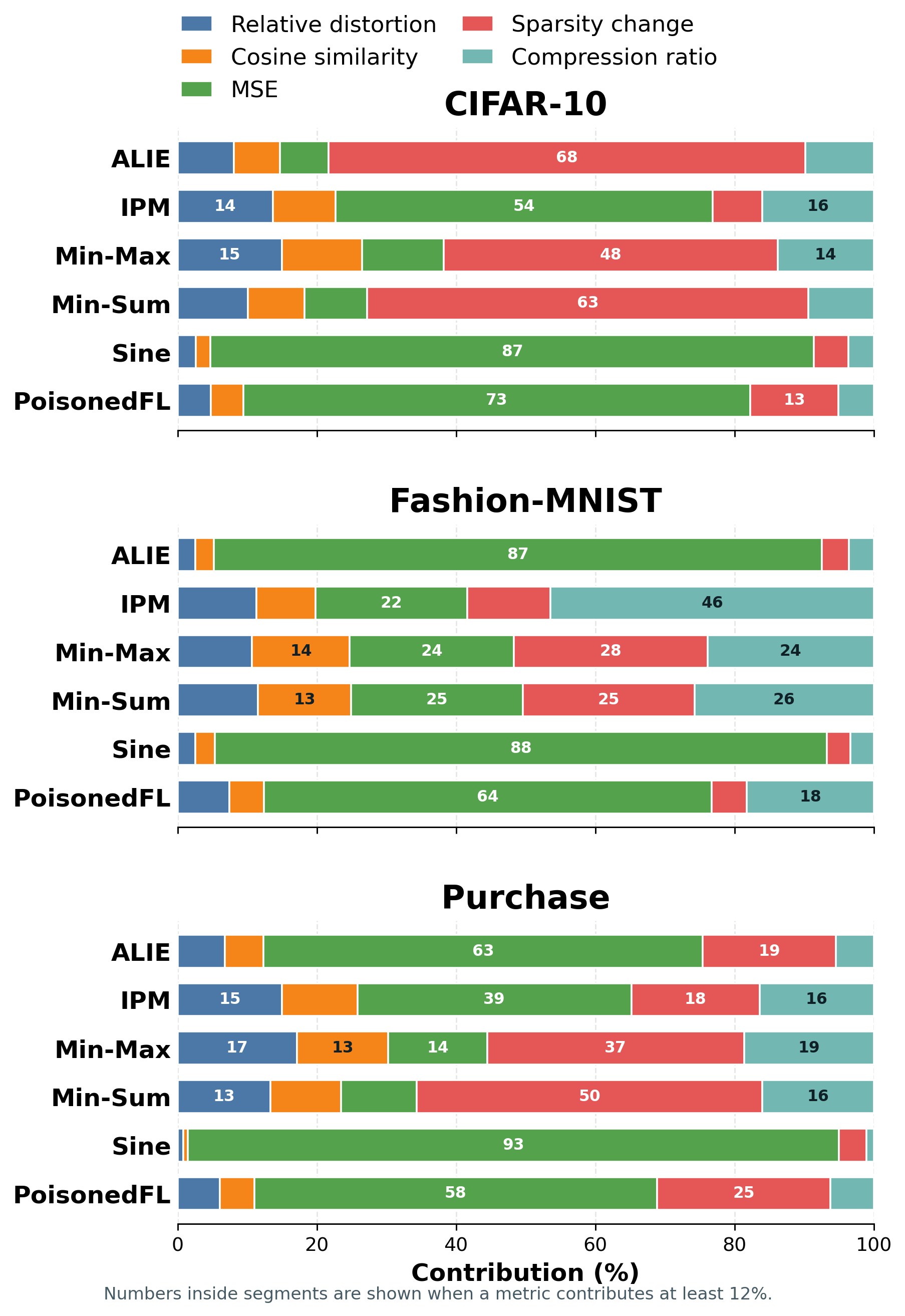}
\caption{Contribution of each compression-footprint metric to honest--malicious separation across attacks.}
\label{fig:craft-metric-contribution}
\end{figure}

\subsection{Accuracy Under Model-Poisoning Attacks}
\label{subsec:accuracy-under-attacks}

We next evaluate whether CRAFT preserves model utility under different
model-poisoning attacks. Figures~\ref{fig:cifar10-model-poisoning-accuracy},
\ref{fig:fmnist-model-poisoning-accuracy}, and
\ref{fig:purchase-model-poisoning-accuracy} show the round-wise accuracy
trajectories, while Table~\ref{tab:final-accuracy-summary} reports final test
accuracy across datasets, attacks, and aggregation rules.

The main pattern is that classical robust aggregation baselines are strongly
attack-dependent. Mean aggregation collapses under several attacks, but it is
not always the weakest baseline: under ALIE, Mean performs competitively because
the attack is constructed from coordinate-wise benign statistics and is designed
to remain within the range tolerated by simple averaging. This is consistent
with the ALIE analysis, which notes that averaging can act as a natural defense
against the specific bounded-deviation construction used by the attack. In
contrast, Median, Trimmed Mean, Krum, and Bulyan each perform well in some
settings but fail in others. For example, Krum is effective against PoisonedFL
on CIFAR-10, but collapses under Min-Max and Min-Sum on Fashion-MNIST. Bulyan is
strong under Sine on CIFAR-10 and Purchase, but is much weaker under several
distance-aware attacks. These results confirm that no single conventional
baseline is uniformly reliable across attack geometries.
CRAFT is more consistent. Across the \(18\) dataset--attack settings in
Table~\ref{tab:final-accuracy-summary}, CRAFT achieves the best final accuracy
in \(7\) settings and is within \(1.7\) percentage points of the best method in
all remaining settings. On CIFAR-10, CRAFT is best under ALIE and IPM and remains
close to the top result under Min-Max, Min-Sum, Sine, and PoisonedFL. On
Fashion-MNIST, CRAFT achieves the best result for five of the six attacks. On
Purchase, CRAFT stays close to the strongest baseline across all attacks,
showing that the footprint signal transfers beyond image CNNs to a tabular MLP
task.

DnC is the strongest competing baseline in several settings, especially on
Purchase. However, DnC requires the server to specify the number of malicious
clients, or at least a reliable upper bound on that number, before aggregation.
This is useful for comparison but is a strong deployment
assumption: underestimating the attacker count can leave malicious updates in
the aggregate, while overestimating it can discard benign signal. CRAFT does not
require this attacker-count input, yet remains competitive with DnC across the
full evaluation. 

Finally, under benign IID training, CRAFT closely tracks Mean across all three datasets, indicating that its robustness does not come at a material cost to clean accuracy (Appendix~\ref{appendix:benign-accuracy}).

\begin{figure*}[t]
\centering
\makebox[\textwidth][c]{%
\resizebox{\textwidth}{!}{%
\scalebox{1}[1.50]{%
\includegraphics{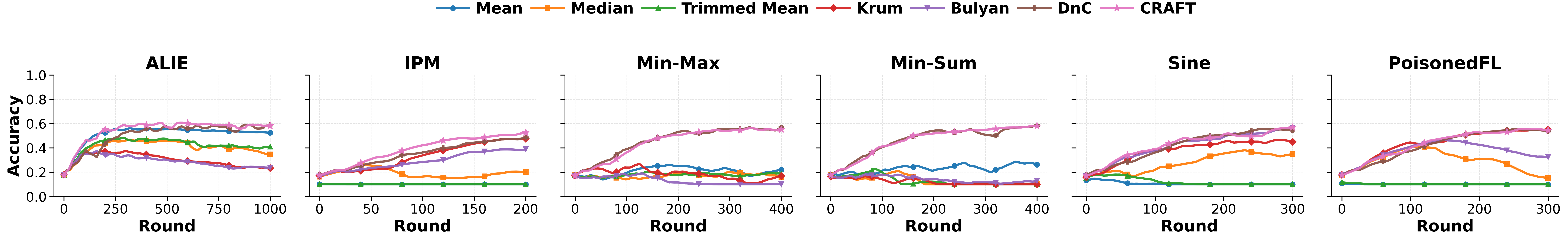}
}%
}%
}
\caption{CIFAR-10 accuracy under model-poisoning attacks.}
\label{fig:cifar10-model-poisoning-accuracy}
\end{figure*}

\begin{figure*}[t]
\centering
\makebox[\textwidth][c]{%
\resizebox{\textwidth}{!}{%
\scalebox{1}[1.50]{%
\includegraphics{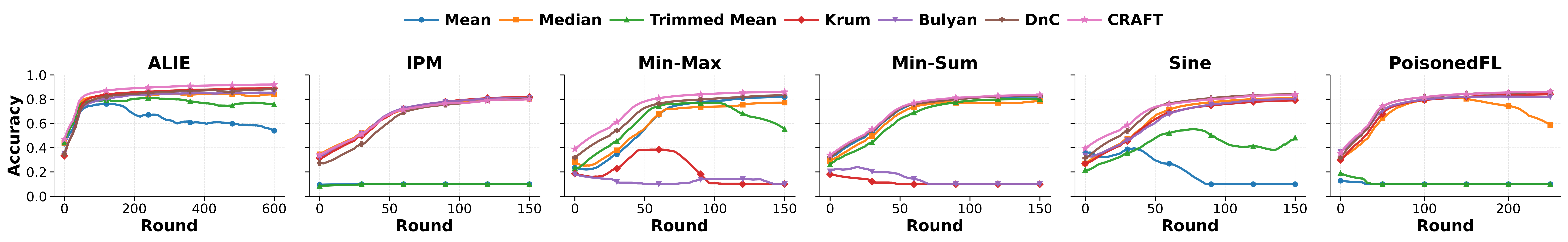}
}%
}%
}
\caption{Fashion-MNIST accuracy under model-poisoning attacks.}
\label{fig:fmnist-model-poisoning-accuracy}
\end{figure*}

\begin{figure*}[t]
\centering
\makebox[\textwidth][c]{%
\resizebox{\textwidth}{!}{%
\scalebox{1}[1.50]{%
\includegraphics{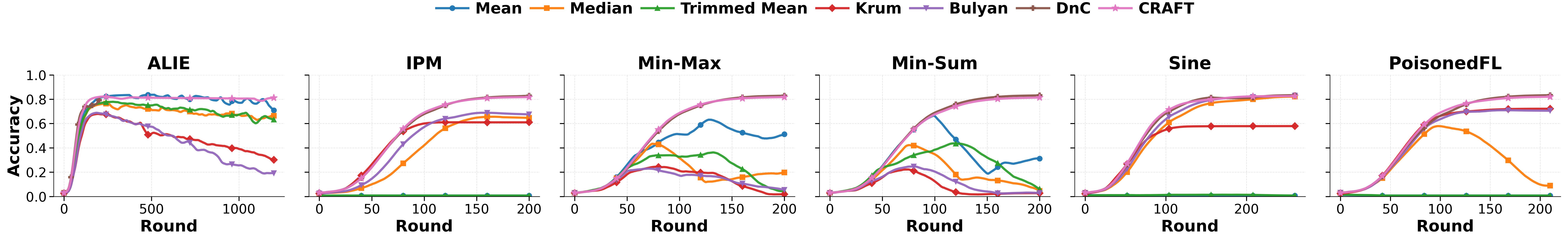}
}%
}%
}
\caption{Purchase accuracy under model-poisoning attacks.}
\label{fig:purchase-model-poisoning-accuracy}
\end{figure*}

\begin{table*}[t]
\centering
\caption{Final test accuracy (\%). The best result in each dataset--attack setting is bold.}
\label{tab:final-accuracy-summary}
\scriptsize
\renewcommand{\arraystretch}{0.95}
\setlength{\tabcolsep}{3.5pt}
\begin{tabularx}{\textwidth}{@{}ll*{6}{>{\centering\arraybackslash}X}@{}}
\toprule
\textbf{Dataset} & \textbf{Defense}
& \textbf{ALIE} & \textbf{IPM}
& \textbf{Min-Max} & \textbf{Min-Sum}
& \textbf{Sine} & \textbf{PoisonedFL} \\
\midrule

\multirow{7}{*}{\textbf{CIFAR-10}}
& Mean         & 52.30 & 10.00 & 19.37 & 23.51 & 10.00 & 10.00 \\
& Median       & 34.80 & 13.99 & 10.93 & 10.01 & 33.77 & 13.88 \\
& Trimmed Mean & 40.49 & 10.00 & 21.95 & 10.00 & 10.00 & 10.00 \\
& Krum         & 23.68 & 48.00 & 17.74 & 10.00 & 48.83 & \textbf{57.43} \\
& Bulyan       & 23.67 & 43.77 & 10.10 & 13.21 & \textbf{57.13} & 31.83 \\
& DnC          & 57.31 & 51.33 & \textbf{58.81} & \textbf{59.85} & 53.92 & 54.26 \\
\rowcolor{blue!8}\cellcolor{white} & \textbf{CRAFT}
& \textbf{58.83}
& \textbf{54.10}
& 57.78
& 58.42
& 56.09
& 56.67 \\

\midrule

\multirow{7}{*}{\textbf{Fashion-MNIST}}
& Mean         & 55.26 & 10.00 & 82.12 & 82.42 & 10.00 & 10.00 \\
& Median       & 84.23 & 80.42 & 76.28 & 78.98 & 80.60 & 47.47 \\
& Trimmed Mean & 76.59 & 10.00 & 46.54 & 80.12 & 52.19 & 10.00 \\
& Krum         & 89.09 & 82.55 & 10.00 & 10.00 & 79.63 & 84.11 \\
& Bulyan       & 85.07 & 81.22 & 10.00 & 10.00 & 81.20 & 81.53 \\
& DnC          & 88.27 & 80.29 & 83.84 & 82.51 & \textbf{84.76} & 86.16 \\
\rowcolor{blue!8}\cellcolor{white} & \textbf{CRAFT}
& \textbf{91.97}
& \textbf{82.87}
& \textbf{86.54}
& \textbf{84.24}
& 83.88
& \textbf{86.25} \\

\midrule

\multirow{7}{*}{\textbf{Purchase}}
& Mean         & 72.32 & 0.73 & 62.49 & 10.72 & 0.73 & 0.73 \\
& Median       & 65.98 & 61.47 & 22.90 & 2.47 & 82.25 & 7.97 \\
& Trimmed Mean & 66.17 & 0.73 & 4.23 & 0.73 & 0.65 & 0.73 \\
& Krum         & 30.19 & 61.00 & 2.35 & 2.72 & 57.91 & 72.11 \\
& Bulyan       & 18.70 & 67.42 & 0.73 & 3.21 & \textbf{83.48} & 70.40 \\
& DnC          & \textbf{81.76} & \textbf{83.23} & \textbf{83.16} & \textbf{83.37} & 83.46 & \textbf{83.51} \\
\rowcolor{blue!8}\cellcolor{white} & \textbf{CRAFT}
& 81.13
& 81.91
& 82.11
& 81.75
& 82.58
& 82.14 \\

\bottomrule
\end{tabularx}
\end{table*}

\subsection{Effect of Client Data Heterogeneity}
\label{subsec:data-heterogeneity}

CRAFT is designed for the IID or mildly heterogeneous regime, where honest
clients are expected to form a compact majority in compression-footprint space.
This compactness is central to the trust mechanism: if honest updates become
highly scattered because clients train on very different label distributions,
distance from the majority footprint region becomes a weaker reliability signal.
For this reason, strongly non-IID FL is outside the main theoretical scope of
CRAFT. We nevertheless include a Dirichlet label-skew sweep as a stress test.

Figure~\ref{fig:noniid-alpha-sweep} reports CIFAR-10 results under Min-Max and
Min-Sum, with lower \(\alpha\) indicating stronger heterogeneity. For Min-Max,
the expected trend is clear: as \(\alpha\) decreases from \(1.0\) to \(0.3\),
the footprint separation ratio drops from about \(7.0\) to \(4.0\), while the
attacker aggregation weight increases from \(0.28\%\) to \(1.21\%\). This
matches the intuition behind our scope: stronger label skew makes the honest
footprint region less compact, weakening the footprint signal.

Min-Sum is less monotonic. Separation falls from \(9.16\) at \(\alpha=1.0\) to
\(2.89\) at \(\alpha=0.5\), but rises again to \(5.12\) at \(\alpha=0.3\). This
does not mean that stronger non-IIDness helps CRAFT in general. Rather, stronger
label skew changes both the honest footprint dispersion and the attack geometry.
Because Min-Sum constrains the total raw-update distance to benign updates, its
solution is sensitive to the full shape of the honest-update cloud. At
\(\alpha=0.3\), the attack can remain close under this raw-distance objective
while becoming less similar to the honest majority after compression, causing
the footprint separation to partially recover.

Even under this stress test, CRAFT substantially limits attacker influence. Mean
aggregation would give the \(9\) malicious clients \(36\%\) of the aggregate
weight. CRAFT keeps attacker weight below \(1.3\%\) for Min-Max and below
\(1.9\%\) for Min-Sum across all tested \(\alpha\) values.
\begin{figure}[t]
\centering
\includegraphics[width=\columnwidth]{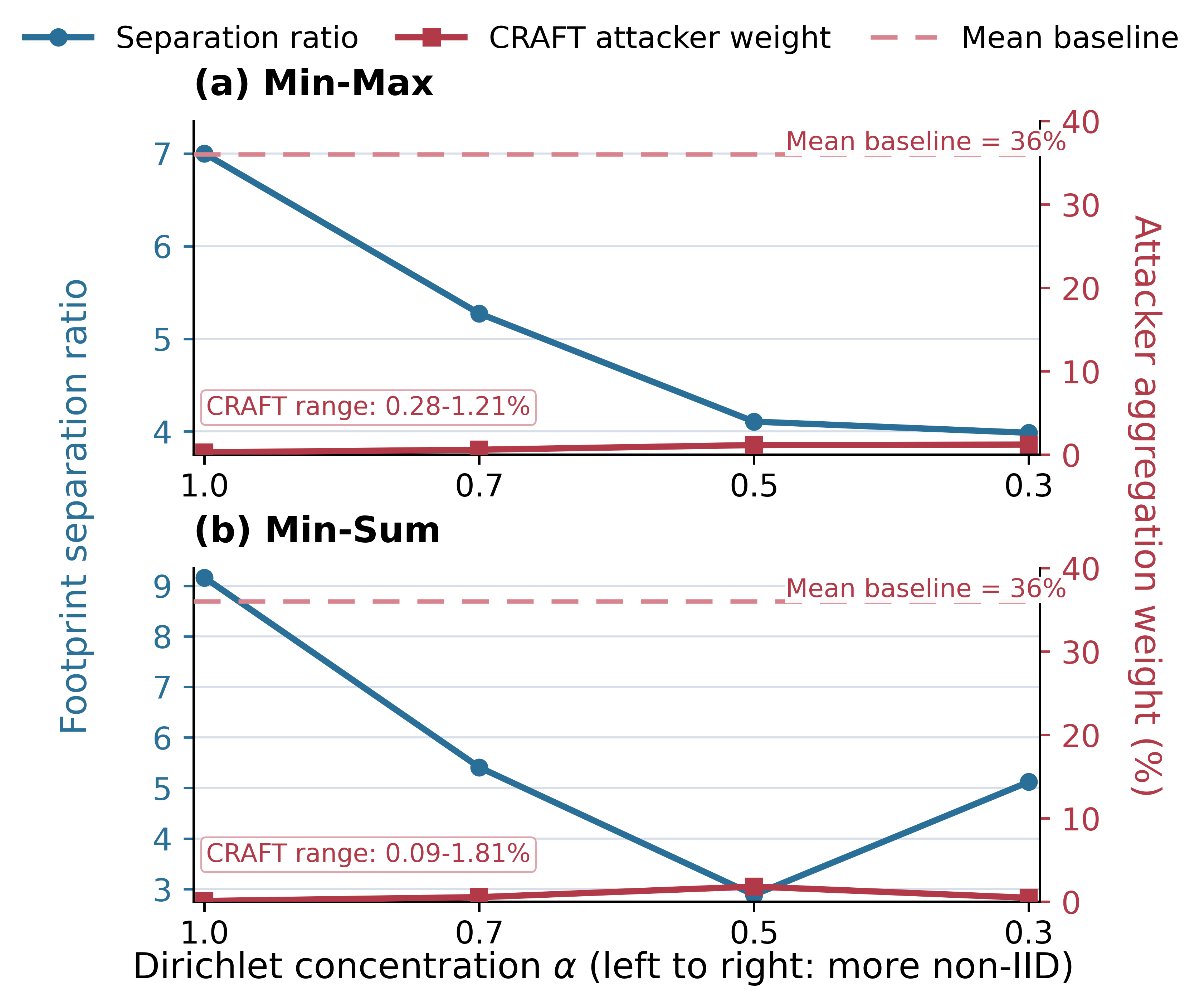}
\caption{Effect of Dirichlet label skew on CRAFT for CIFAR-10 under Min-Max and
Min-Sum attacks. Lower \(\alpha\) indicates stronger non-IIDness. The dashed
line shows the \(36\%\) attacker weight under mean aggregation.}
\label{fig:noniid-alpha-sweep}
\end{figure}
\subsection{Robustness to a Defense-Aware Attack}
\label{sec:adaptive-attack}

We next test a restricted defense-aware variant of IPM. Instead of using one
fixed attack strength, the attacker searches over a set of scalar values
\(\gamma\). For each candidate, it forms
$g_A(\gamma) = \bar{g}_H + \gamma d$,
where \(\bar{g}_H\) is a benign reference update and \(d\) is the IPM direction.
Each candidate is then evaluated after passing through the full CRAFT pipeline:
server-side recompression, footprint normalization, core selection, trust
assignment, and aggregation. The attacker selects the \(\gamma\) that maximizes
the final post-CRAFT aggregate deviation from the benign aggregate. This choice
captures the tradeoff between attack strength and trust: a large \(\gamma\) may
produce a more harmful update but receive little trust, while a moderate
\(\gamma\) may receive more trust and therefore have larger realized influence.
Thus, the selected \(\gamma\) is the scale that causes the greatest effective
damage after CRAFT has assigned trust, not necessarily the largest attack scale.
Figure~\ref{fig:adaptive-ipm-craft} shows the 50-round stress test for CIFAR-10
and Fashion-MNIST; the Purchase result is in
Appendix~\ref{appendix:adaptive-purchase}. The lower row shows the total
aggregation weight assigned to malicious clients, and the dashed line marks the
raw malicious client share, \(9/25=36\%\). On CIFAR-10, adaptive IPM closely
tracks standard IPM, while attacker weight remains mostly near zero with only
brief spikes. Fashion-MNIST is more challenging: adaptive scaling causes a
larger accuracy gap and occasional attacker-weight spikes, but the malicious
weight still remains below the raw attacker share. On Purchase, adaptive and
standard IPM are nearly indistinguishable, with attacker weight close to zero
throughout. 
\begin{figure}[t]
\centering
\includegraphics[width=0.85\columnwidth]{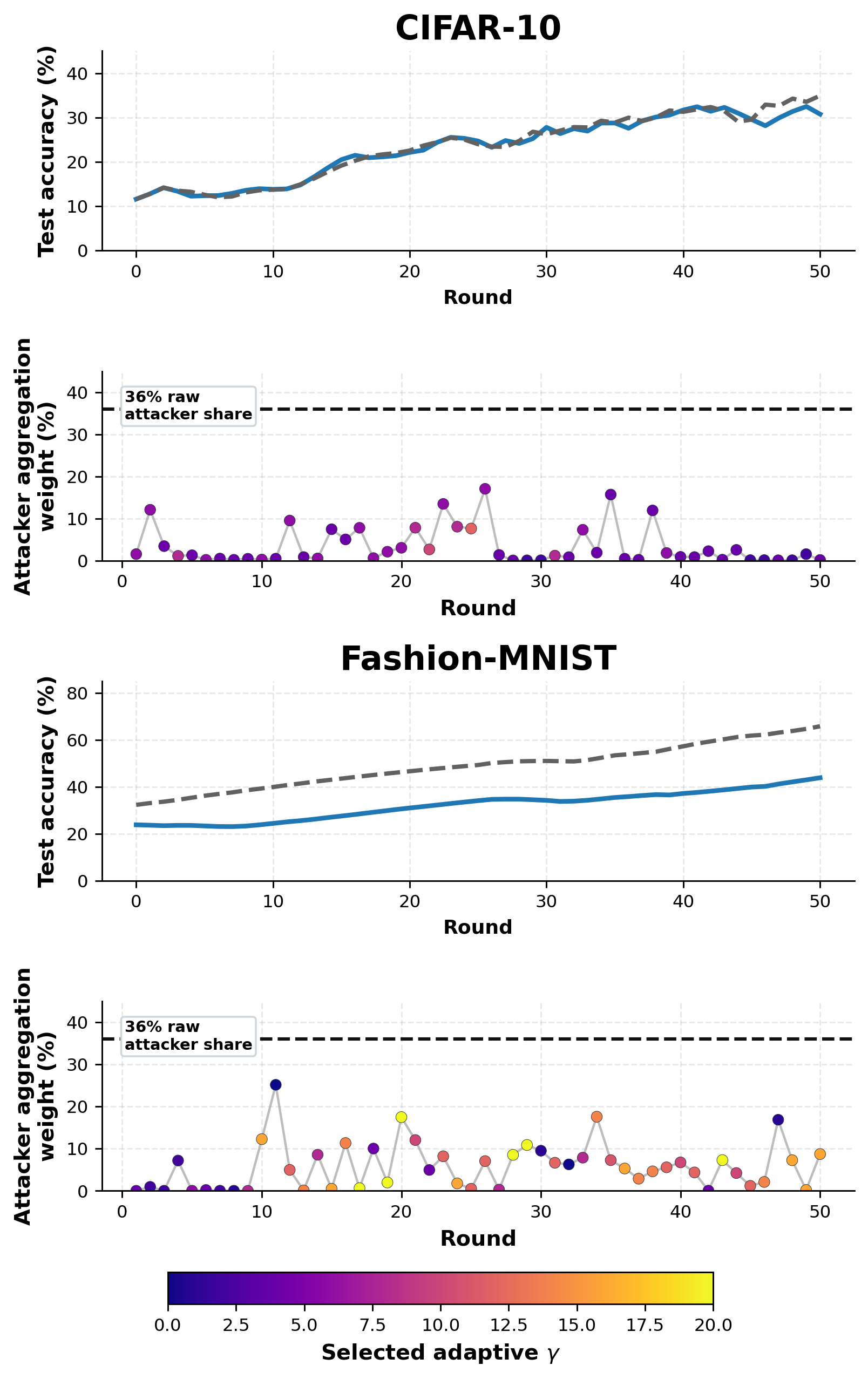}
\vspace{-0.5em}
\caption{Restricted adaptive IPM results.}
\label{fig:adaptive-ipm-craft}
\end{figure}

\subsection{Server-Side Overhead}

Across datasets, CRAFT adds server-side overhead because it performs an
additional EBLC pass to expose the compression footprint of each client update.
For CIFAR-10, the compressed-mean server path takes about 0.40 seconds per
round, while CRAFT takes about 1.35 seconds per round. For Fashion-MNIST and
Purchase, which use larger models in our experiments, CRAFT takes about 3.11
and 3.30 seconds per round, respectively.
The overhead is dominated by footprint extraction rather than the trust
computation itself. The majority-core search and trust-weight assignment take
less than 0.3 ms per round on all three datasets. Although the direct
majority-core implementation has $O(n^2 d_\phi)$ time complexity over $n$
clients, the footprint dimension is fixed at $d_\phi=5$, so this stage remains
small at the evaluated client scale. The measured overhead comes primarily from producing compression
footprints, while the low-dimensional trust computation contributes only
negligibly to the total runtime

\section{Discussion and Limitations}
\label{sec:discussion-limitations}

The effectiveness of CRAFT depends on the geometry of compression footprints,
not simply on dataset difficulty. What matters is whether honest clients form a
compact footprint majority and whether malicious updates move into a different
compression-behavior region. This explains why the signal varies across attacks,
models, and data partitions: some settings naturally expose stronger
compression-footprint separation than others.

Our mathematical argument targets the IID or mildly heterogeneous regime, where
honest footprints are expected to be coherent. Under strong non-IID partitions,
honest footprints can become more dispersed, making distance from the majority
region a weaker reliability signal. The non-IID and defense-aware experiments
therefore serve as stress tests of this assumption rather than full guarantees
outside the intended regime. CRAFT also adds server-side overhead because the
server performs diagnostic recompression to extract footprints; however, this
step is server-side, so malicious clients cannot choose or forge the footprint
measurements used for trust assignment.

\section{Conclusion}
\label{sec:conclusion}

This paper shows that EBLC-induced information loss can be used as a security
signal in federated learning. CRAFT operationalizes this idea by converting
compression-footprint statistics into trust weights before aggregation. Across
datasets and model-poisoning attacks, CRAFT is competitive with established
Byzantine-robust aggregators while requiring no prior knowledge of the exact
malicious-client count.

More broadly, this work opens a new direction for using EBLC compressors in
robust aggregation. Rather than treating lossy compression only as a
communication-efficiency mechanism, we show that the structure of its induced
loss can expose security-relevant differences between honest and malicious
updates. Future work will extend this idea to more heterogeneous client
populations, where honest clients may form multiple footprint regions rather
than a single compact majority.

\bibliographystyle{IEEEtran}
\bibliography{biblio}

\clearpage
\appendices
\section{Footprint Trust and Malicious-Client Fraction}
\label{appendix:malicious-client-fraction}

A useful trust mechanism should not only work at one fixed malicious fraction;
it should degrade predictably as the number of attackers increases. We study
this behavior on CIFAR-10 under the IPM attack by varying the malicious client
fraction while keeping the aggregation rule fixed to CRAFT.

Figure~\ref{fig:attacker-influence-majority-core} reports the effective
malicious influence that remains in the final aggregation after trust
weighting. This is the quantity that directly affects the global update: even
if malicious clients participate in a round, their effect is small when their
assigned aggregation weight is small.

CRAFT suppresses malicious influence by several orders of magnitude across a
wide range of malicious fractions. With $8\%$ malicious clients, the surviving
malicious influence is only $7.6\times 10^{-9}\%$. For $16\%$, $20\%$, and
$24\%$ malicious clients, the influence remains below $10^{-5}\%$. Even at
$32\%$ malicious clients, the effective malicious influence is only
$0.029\%$. Thus, although nearly one third of the clients are malicious, their
contribution to the final aggregated update is almost entirely suppressed.

The influence increases at higher malicious fractions, reaching $0.62\%$ at
$40\%$ malicious clients and $1.36\%$ at $44\%$. These values are still far
below the corresponding raw malicious client fractions. Under mean aggregation,
attackers would receive influence proportional to their client share; under
CRAFT, their effective contribution remains roughly one percent or less in all
tested settings.

The right panel of Figure~\ref{fig:attacker-influence-majority-core} explains
this behavior through the majority-core mechanism. CRAFT estimates a
majority-supported footprint core and assigns trust based on distance from that
core. Up to $32\%$ malicious clients, no attackers enter the majority radius,
so the core remains supported by honest clients. In this regime, attacker
footprints lie outside the dominant region and receive very small weights. At
$40\%$ and $44\%$, a small fraction of attackers enters the majority radius
($5\%$ in both cases). Once malicious clients begin to appear inside the
majority-supported region, their distance to the core decreases and their
surviving influence increases.

This behavior matches the honest-majority design assumption of CRAFT. The
experiment shows that CRAFT strongly suppresses malicious influence while the
majority core remains honest-dominated, and it also illustrates the expected
failure mode: suppression weakens when the majority-supported footprint region
begins to include malicious clients.

\begin{figure}[t]
\centering
\resizebox{\columnwidth}{!}{%
\scalebox{1}[0.75]{%
\includegraphics{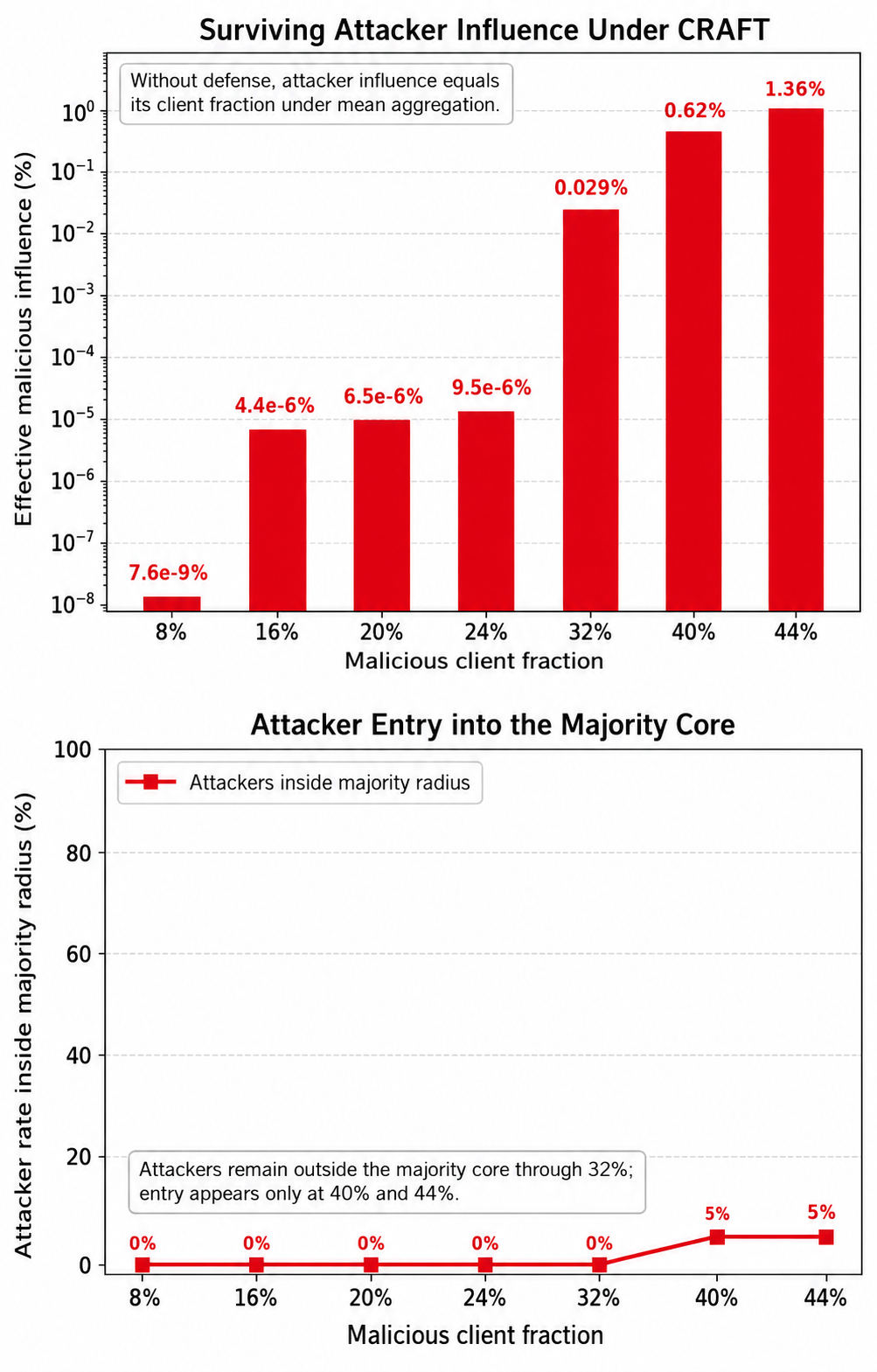}
}%
}
\caption{Effect of malicious client fraction on CRAFT for CIFAR-10 under IPM. Malicious influence remains negligible until attackers begin to enter the majority-supported footprint core.}
\label{fig:attacker-influence-majority-core}
\end{figure}

\section{EBLC and Top-\(K\) footprint distributions.}
Figure~\ref{fig:violin-topk-eblc} shows the per-metric footprint distributions
for honest and malicious clients under EBLC/SZ and Top-\(K\). The EBLC/SZ
footprints separate honest and malicious updates across several metrics, while
Top-\(K\) produces a more limited signal concentrated in fewer coordinates.

\begin{figure*}[t]
\centering
\includegraphics[width=0.95\textwidth]{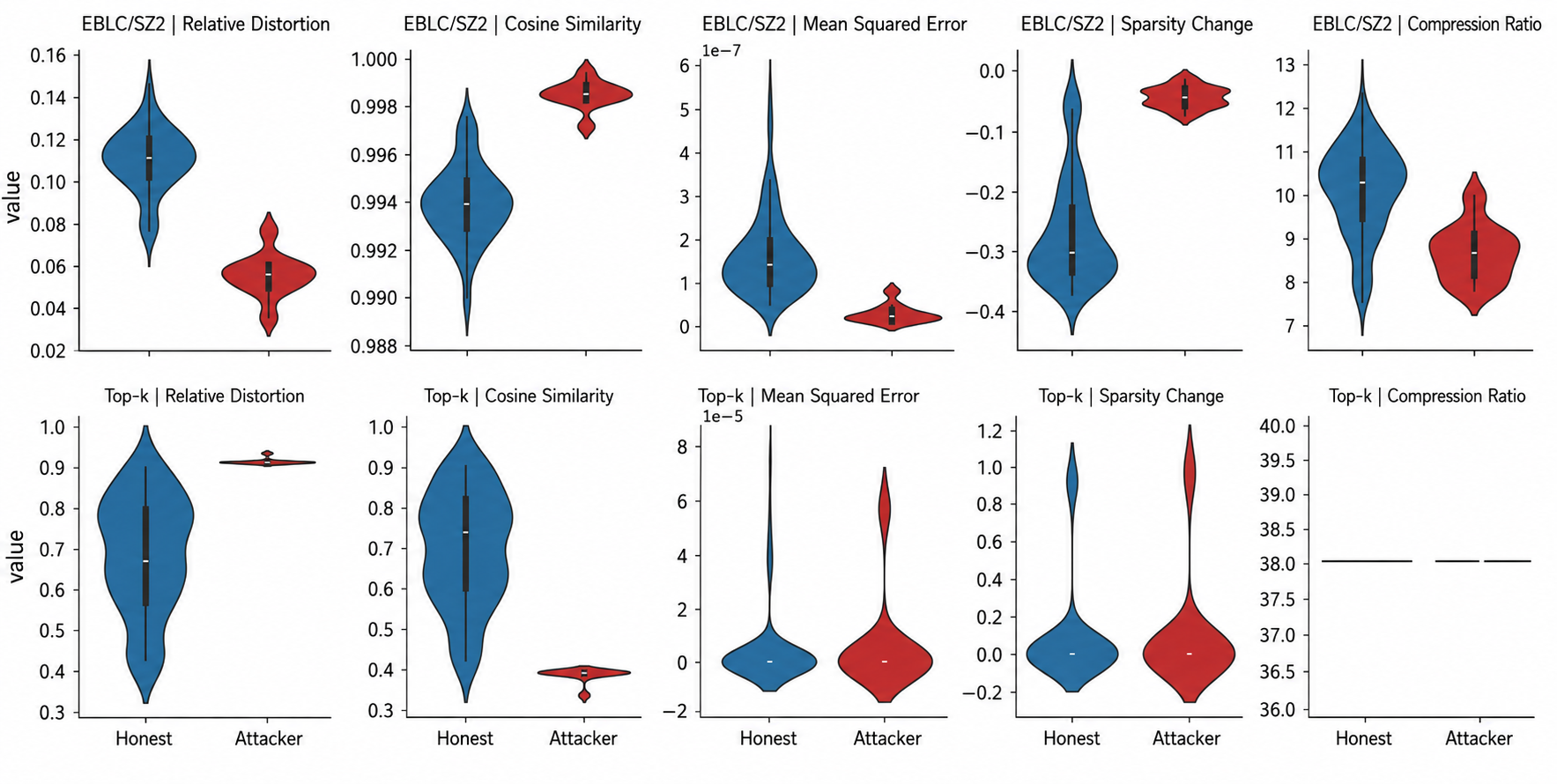}
\caption{Per-metric footprint distributions for honest and malicious clients
under EBLC/SZ (top row) and Top-\(K\) (bottom row). EBLC/SZ exhibits separation
across multiple footprint coordinates, whereas Top-\(K\) is informative mainly
in a narrower subset of metrics.}
\label{fig:violin-topk-eblc}
\end{figure*}

\section{Defense-Aware Attack on Purchase}
\label{appendix:adaptive-purchase}

Figure~\ref{fig:adaptive-ipm-craft-purchase} shows the Purchase result for the
restricted defense-aware IPM attack. Adaptive and standard IPM are nearly
indistinguishable, and CRAFT assigns almost no aggregation weight to malicious
clients across the run.

\begin{center}
\includegraphics[width=\columnwidth]{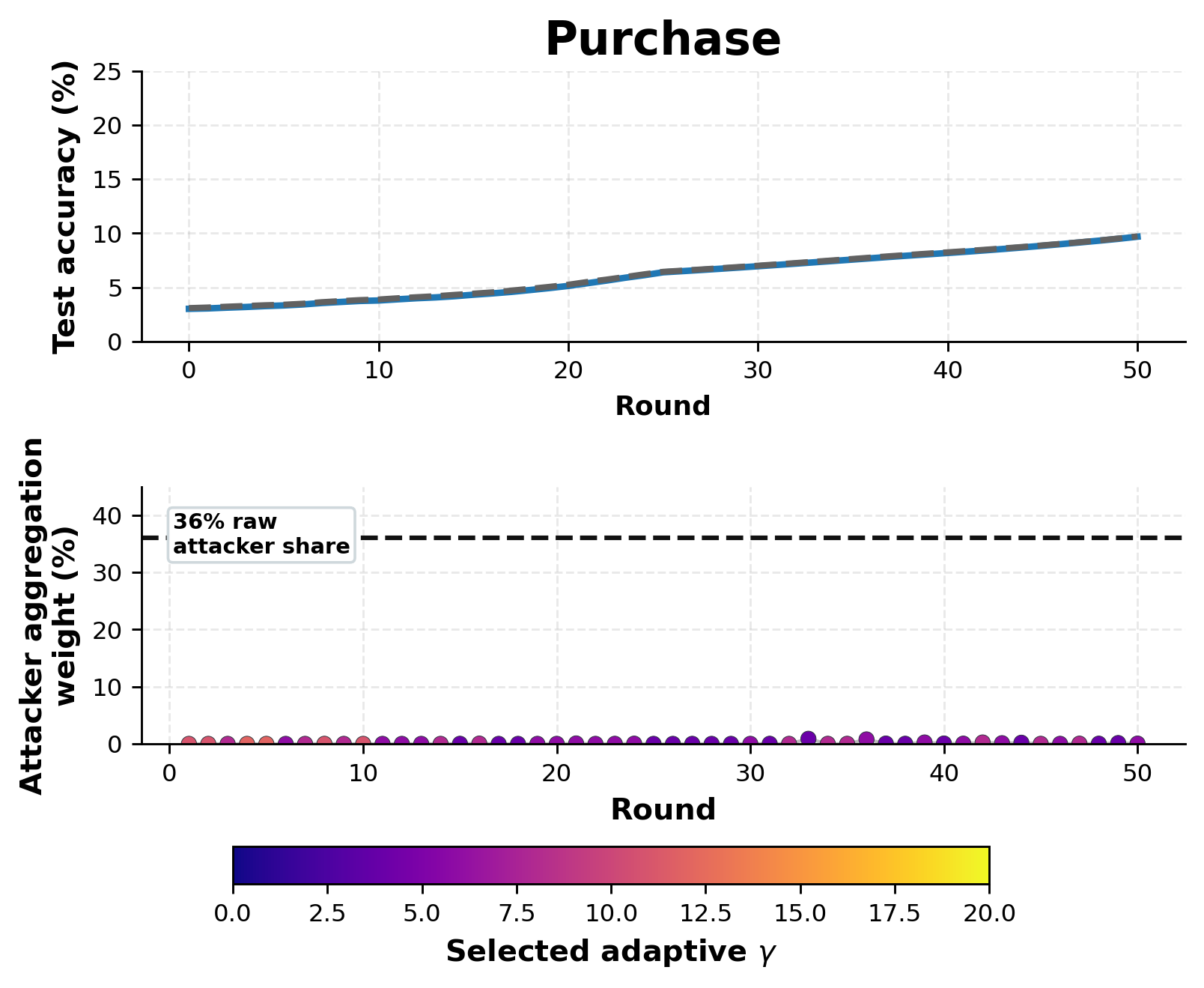}
\refstepcounter{figure}\label{fig:adaptive-ipm-craft-purchase}
{\small Fig.~\thefigure: Purchase under restricted adaptive IPM. CRAFT assigns almost no
aggregation weight to attackers.}
\end{center}

\section{Benign Accuracy Without Poisoning}
\label{appendix:benign-accuracy}

A robust aggregator should not harm training when no attack is present.
Figure~\ref{fig:no-attack-baseline} compares CRAFT with Mean aggregation under
the same compressed FedSGD pipeline.

Across CIFAR-10, Fashion-MNIST, and Purchase, CRAFT closely follows the Mean
baseline. This shows that the footprint-based trust mechanism does not
unnecessarily suppress honest updates in the benign IID setting. Therefore,
the robustness gains reported later are not caused by sacrificing clean
accuracy.

\begin{figure*}
\centering
\includegraphics[width=0.88\textwidth]{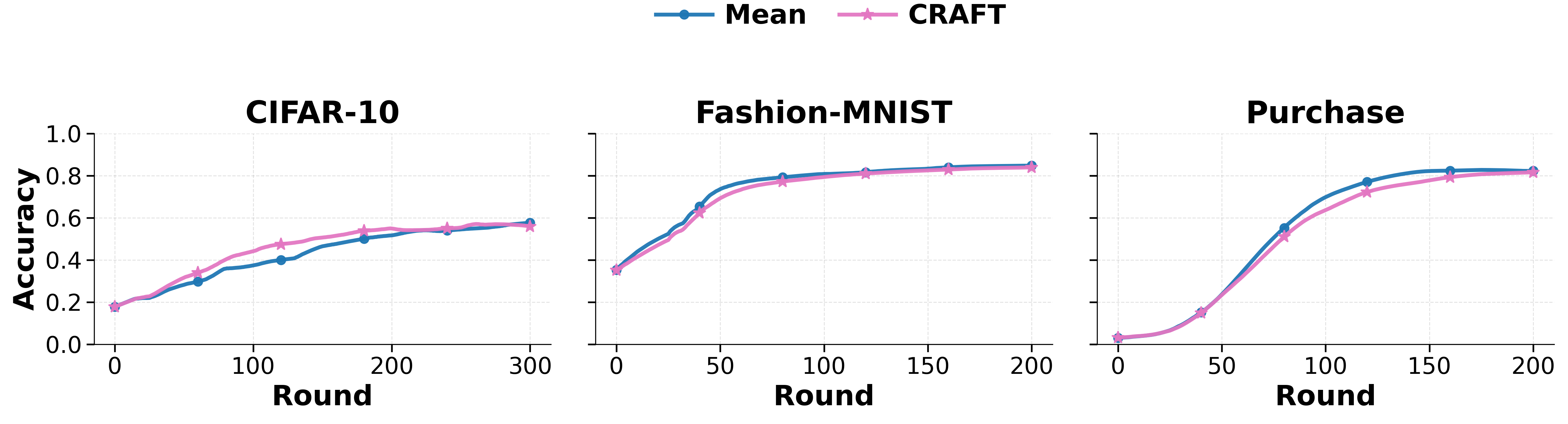}
\caption{No-attack baseline accuracy for Mean aggregation and CRAFT on
CIFAR-10, Fashion-MNIST, and Purchase. CRAFT closely tracks Mean when all
clients are honest.}
\label{fig:no-attack-baseline}
\end{figure*}
\section{Trust-Parameter Ablation}
\label{appendix:trust-parameter-ablation}

Figure~\ref{fig:trust-parameter-ablation} reports an ablation of the two
CRAFT trust-shaping parameters, \(\lambda\) and \(p\), on CIFAR-10 under IPM.
The tested settings produce similar training behavior, with the last-50-round
mean accuracy ranging from \(45.4\%\) to \(47.2\%\). We use \(\lambda=2\) and
\(p=4\) in the main experiments as a fixed operating point.

\begin{figure*}[t]
\centering
\includegraphics[width=0.90\textwidth]{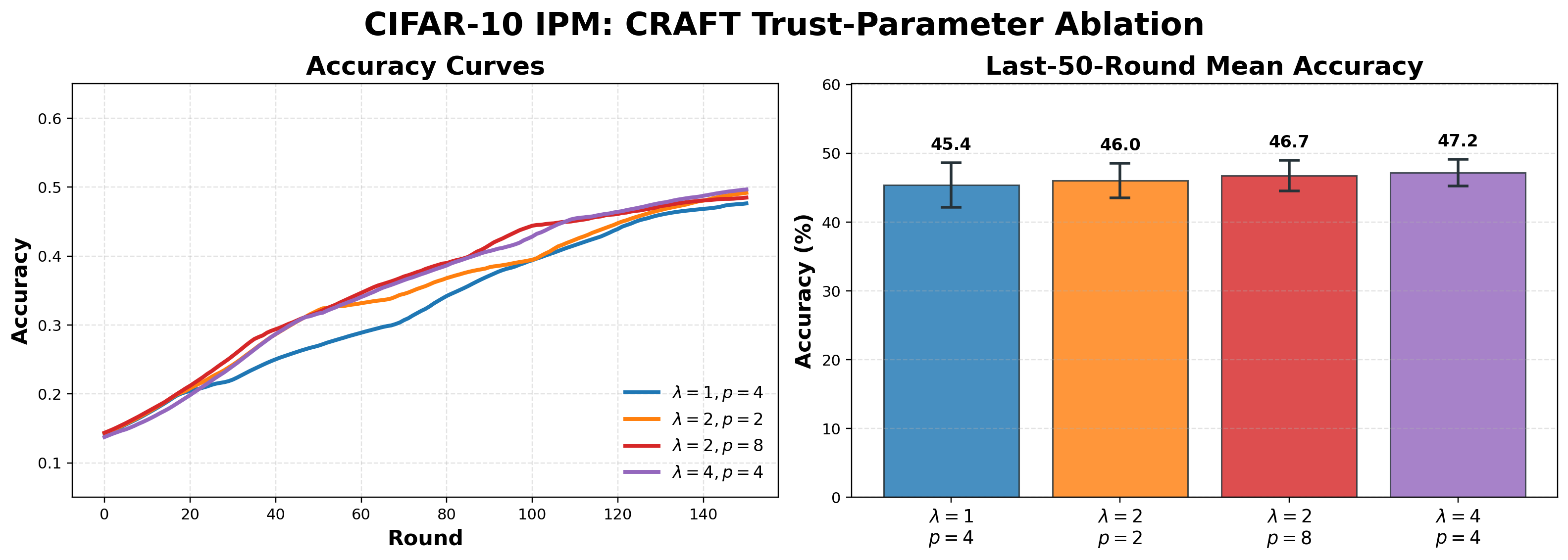}
\caption{CRAFT trust-parameter ablation on CIFAR-10 under IPM.}
\label{fig:trust-parameter-ablation}
\end{figure*}

\end{document}